\documentclass{article}





    \usepackage[nonatbib, final]{neurips_2022}

\usepackage[utf8]{inputenc} 
\usepackage[T1]{fontenc}    
\usepackage{hyperref}       
\usepackage{url}            
\usepackage{booktabs}       
\usepackage{amsfonts}       
\usepackage{nicefrac}       
\usepackage{microtype}      
\usepackage{xcolor}         
\usepackage{wrapfig}
\usepackage{graphicx}
\usepackage{amsmath}
\usepackage{amssymb}
\usepackage{amsthm}
\usepackage{dsfont}
\usepackage{xurl}
\usepackage{hyperref}
\usepackage{txfonts}
\usepackage{subfig}
\usepackage{url}

\usepackage{algorithm,algorithmic}

\newtheorem{theorem}{Theorem}

\newtheorem{definition}{Definition}

\newtheorem{example}{Example}
\graphicspath{ {figs/} }

\title{DeepTOP: Deep Threshold-Optimal Policy for MDPs and RMABs}

%

\author{
  Khaled Nakhleh \hspace{1cm} I-Hong Hou\\
  \\ 
  Electrical and Computer Engineering Department\\
  Texas A\&M University\\
  College Station, TX \\
  \texttt{\{khaled.jamal, ihou\}@tamu.edu}\\
}

\begin{document}

\maketitle
\begin{abstract}
We consider the problem of learning the optimal threshold policy for control problems. Threshold policies make control decisions by evaluating whether an element of the system state exceeds a certain threshold, whose value is determined by other elements of the system state. By leveraging the monotone property of threshold policies, we prove that their policy gradients have a surprisingly simple expression. We use this simple expression to build an off-policy actor-critic algorithm for learning the optimal threshold policy. Simulation results show that our policy significantly outperforms other reinforcement learning algorithms due to its ability to exploit the monotone property.

In addition, we show that the Whittle index, a powerful tool for restless multi-armed bandit problems, is equivalent to the optimal threshold policy for an alternative problem. This observation leads to a simple algorithm that finds the Whittle index by learning the optimal threshold policy in the alternative problem. Simulation results show that our algorithm learns the Whittle index much faster than several recent studies that learn the Whittle index through indirect means.
\end{abstract}
\vspace{-0.5em}
\section{Introduction} \label{sec:intro}

This paper considers a class of control policies, called \emph{threshold policies}, that naturally arise in many practical problems. For example, a smart home server may only turn on the air conditioner when the room temperature exceeds a certain threshold, and a central bank may only raise the interest rate when inflation exceeds a certain threshold. For such problems, finding the optimal control policies can be reduced to finding the appropriate thresholds given other factors of the system, such as the number of people in the room in the smart home server scenario or the unemployment rate and the current interest rate in the central bank scenario.

An important feature of threshold policies is that their actions are monotone. For example, if a smart home server would turn on the air conditioner at a certain temperature, then, all other factors being equal, the server would also turn on the air conditioner when the temperature is even higher. By leveraging this monotone property, an algorithm aiming to learn the optimal threshold can potentially be much more efficient than generic reinforcement learning algorithms seeking to learn the optimal action at different points of temperature separately.
In order to design an efficient algorithm for learning the optimal threshold policy, we first formally define a class of Markov decision processes (MDPs) that admit threshold policies and its objective function. The optimal threshold policy is then the one that maximizes the objective function. However, the objective function involves an integral over a continuous range, which makes it infeasible to directly apply standard tools, such as backward-propagation in neural networks, to perform gradient updates.

Surprisingly, we show that, by leveraging the monotone property of threshold policies, the gradient of the objective function has a very simple expression. Built upon this expression, we propose Deep Threshold-Optimal Policy (DeepTOP), a model-free actor-critic deep reinforcement learning algorithm that finds the optimal threshold policies. We evaluate the performance of DeepTOP by considering three practical problems, an electric vehicle (EV) charging problem that determines whether to charge an EV in the face of unknown fluctuations of electricity price, an inventory management problem that determines whether to order for goods in the face of unknown seasonal demands, and a make-to-stock problem for servicing jobs with different sizes. For all problems, DeepTOP significantly outperforms other state-of-the-art deep reinforcement learning algorithms due to its ability to exploit the monotone property.

We also study the notoriously hard restless multi-armed bandit (RMAB) problem. We show that the Whittle index policy, a powerful tool for RMABs, can be viewed as an optimal threshold policy for an alternative problem. Based on this observation, we define an objective function for the alternative problem, of which the Whittle index is the maximizer. We again show that the gradient of the objective function has a simple expression. This simple expression allows us to extend DeepTOP for the learning of the Whittle index. We compare this DeepTOP extension to three recently proposed algorithms that seek to learn the optimal index policies through other indirect properties. Simulation results show that the DeepTOP extension learns much faster because it directly finds the optimal threshold policy.

The rest of the paper is organized as follows. 
Section \ref{sec:problem_MDP} defines the MDP setting and threshold policies.
We present the DeepTOP algorithm for MDP in Section \ref{sec:deeptop_mdp}.
We then discuss how the Whittle index policy for RMABs can be viewed as a threshold policy in Section \ref{sec:problem_rmabs} and develop a DeepTOP extension for learning it in Section \ref{sec:deeptop_rmab}.
We show DeepTOP's performance results for MDPs and RMABs in Section \ref{sec:experiments}, and give related works in Section \ref{sec:related} before concluding.
\vspace{-0.5em}
\section{Threshold Policies for MDPs} \label{sec:problem_MDP}

Consider an agent controlling a stochastic environment $\mathcal{E}$ described as an MDP $\mathcal{E} = (\mathcal{S}, \mathcal{A}, \mathcal{R}, \mathcal{P}, \gamma)$, with state space $\mathcal{S}$, binary action space $\mathcal{A} := \{0,1\}$, reward function $\mathcal{R} : \mathcal{S} \times \mathcal{A} \rightarrow \Omega$, transition dynamics $\mathcal{P} : \mathcal{S} \times \mathcal{A} \times \mathcal{S} \rightarrow \varmathbb{R}$, and discount factor $\gamma \in [0, 1)$, where $\varmathbb{R}$ is the set of real numbers and $\Omega$ is the set of random variables.
At each timestep $t$, the agent picks an action $a_t \in \mathcal{A}$ for the current state $s_t$.
The state $s_t\in \mathcal{S}=\varmathbb{R}\times \mathcal{V}$ has two components: a scalar state $\lambda_t\in \varmathbb{R}$, and a vector state $v_t\in \mathcal{V}$, where $\mathcal{V}$ is a discrete set of vectors.
We assume the environment state is fully observable.
Given the state-action pair $(s_t, a_t)$, the MDP generates a reward $r_t$ following the unknown random variable $\mathcal{R}(s_t, a_t)$, and a random next state $s_{t+1}=(\lambda_{t+1}, v_{t+1})$ following the unknown distribution $\mathcal{P}$. We use $\bar{r}(\lambda, v,a):=E[\mathcal{R}((\lambda, v), a)]$ to denote the unknown expected one-step reward that can be obtained for the state-action pair $(\lambda, v, a)$. 

A threshold policy is one that defines a threshold function $\mu: \mathcal{V}\rightarrow \varmathbb{R}$ mapping each vector state to a real number. 
The policy then deterministically picks $a_t =\mathds{1}(\mu(v_t) > \lambda_t)$, where $\mathds{1}(\cdot)$ is the indicator function. There are many applications where it is natural to consider threshold policies and we discuss some of them below.

\begin{example}
Consider the problem of charging electric vehicles (EV). When an EV arrives at a charging station, it specifies its demands for charge and a deadline upon which it will leave the station. The electricity price changes over time following some random process. The goal of the operator is to fulfill the EV's requirement with minimum cost. In this problem, we can model the system by letting the scalar state $\lambda_t$ be the current electricity price and the vector state $v_t$ be the remaining charge and time to deadline of the EV. For this problem, it is natural to consider a threshold policy that defines a threshold $\mu(v_t)$ as the highest price the operator is willing to pay to charge the EV under vector state $v_t$. The operator only charges the vehicle, i.e., chooses $a_t=1$, if $\lambda_t<\mu(v_t)$.
\end{example}

\begin{example}
Consider the problem of warehouse management. A warehouse stores goods waiting to be sold. When the number of stored goods exceeds the demand, then there is a holding cost for each unsold good. On the other hand, if the number of stored goods is insufficient to fulfill the demand, then there is a cost of lost sales. The goal of the manager is to decide when to place orders so as to minimize the total cost. In this problem we can let the scalar state $\lambda_t$ be the current inventory and let the vector state $v_t$ be the vector of all factors, such as upcoming holidays, that can influence future demands. It is natural to consider a threshold policy where the manager only places a new order if the current inventory $\lambda_t$ falls below a threshold $\mu(v_t)$ based on the current vector state $v_t$.
\end{example}

\begin{example}
Consider a smart home server that controls the air conditioner. Let $\lambda_t$ be $-($current temperature$)$ and $v_t$ be the time of the day and the number of people in the house. The server should turn on the air conditioner only if the temperature exceeds some threshold determined by $v_t$, or, equivalently, $\lambda_t<\mu(v_t)$.
\end{example}

Given a threshold policy with threshold function $\mu(\cdot)$, we can define the corresponding action-value function by $Q_{\mu}(\lambda, v, a)$. Let $\rho_{\mu}(\lambda',v', \lambda, v)$ be the discounted state distribution when the initial state is $(\lambda, v)$ under the threshold policy to a visited state $(\lambda', v')$. When the initial state is $(\lambda, v)$, the expected discounted reward under the policy is 
\begin{equation}
\label{equation:q_value_mdp}
 Q_{\mu}\Big(\lambda, v, \mathds{1}(\mu(v) > \lambda)\Big) = \sum_{v'\in \mathcal{V}} \int_{\lambda' = -M}^{\lambda' = +M}\rho_{\mu}(\lambda',v', \lambda, v)\bar{r}\big(\lambda',v', \mathds{1}(\mu(v') > \lambda')\big).
\end{equation}

Let $M$ be a sufficiently large constant such that $\lambda_t\in[-M, +M]$ for all $t$. Our goal is to learn the optimal threshold function $\mu^\phi(v)$ parametrized by a vector $\phi$ that maximizes the objective function

\begin{equation} \label{equation:objective_mdp_var_lambda}
    K(\mu^\phi):=  \int_{\lambda = -M}^{\lambda = +M} \sum_{v\in\mathcal{V}}Q_{\mu^\phi}\Big(\lambda, v, \mathds{1}(\mu^\phi(v) > \lambda)\Big) d\lambda.
\end{equation}

\vspace{-1em}
\section{Deep Threshold Optimal Policy for MDPs} \label{sec:deeptop_mdp}

In this section, we present a deep threshold optimal policy (DeepTOP) for MDPs that finds the optimal $\phi$ for maximizing $K(\mu^\phi)$.

\subsection{Threshold Policy Gradient Theorem for MDPs}  \label{sec:threshold_theorem_mdp}

In order to design DeepTOP, we first study the gradient $\nabla_\phi K(\mu^\phi)$. At first glance, computing $\nabla_\phi K(\mu^\phi)$ looks intractable since it involves an integral over $\lambda \in [-M, +M]$. However, we establish the following threshold policy gradient theorem that shows the surprising result that $\nabla_\phi K(\mu^\phi)$ has a simple expression.

\begin{theorem} \label{theorem:policy_gradient_MDP}
Given the parameter vector $\phi$, let $\bar{\rho}(\lambda, v)$ be the discounted state distribution when the initial state is chosen uniformly at random under the threshold policy. If all vector states $v\in \mathcal{V}$ have distinct values of $\mu^\phi(v)$, then,
\begin{align} \label{equation:threshold_mdp}
    &\nabla_\phi K(\mu^\phi)
    =2M|\mathcal{V}| \sum_{v\in\mathcal{V}} \bar{\rho}(\mu^\phi(v),v) \Big( Q_{\mu^\phi}\big(\mu^\phi(v), v,1\big) 
     - Q_{\mu^\phi}\big(\mu^\phi(v), v,0\big)\Big) \nabla_\phi\mu^\phi(v).
\end{align}
\end{theorem}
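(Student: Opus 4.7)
The plan is to reduce $\nabla_\phi K(\mu^\phi)$ to a sum of boundary terms produced by Leibniz's rule, and then unroll the Bellman recursion to re-assemble those boundary terms into the discounted visitation distribution $\bar\rho$.

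First I would fix $v\in\mathcal{V}$ and exploit the fact that $\mathds{1}(\mu^\phi(v)>\lambda)$ equals $1$ for $\lambda<\mu^\phi(v)$ and $0$ for $\lambda>\mu^\phi(v)$, so the inner integrand is piecewise and the integral splits at the threshold:
\begin{equation*}
\int_{-M}^{+M} Q_{\mu^\phi}\!\bigl(\lambda,v,\mathds{1}(\mu^\phi(v)>\lambda)\bigr)\,d\lambda
= \int_{-M}^{\mu^\phi(v)} Q_{\mu^\phi}(\lambda,v,1)\,d\lambda + \int_{\mu^\phi(v)}^{+M} Q_{\mu^\phi}(\lambda,v,0)\,d\lambda.
\end{equation*}
Differentiating in $\phi$ via Leibniz produces two pieces: an \emph{interior} piece, obtained by integrating $\nabla_\phi Q_{\mu^\phi}(\lambda,v,a)$ at a fixed action, and a \emph{boundary} piece $\bigl(Q_{\mu^\phi}(\mu^\phi(v),v,1)-Q_{\mu^\phi}(\mu^\phi(v),v,0)\bigr)\nabla_\phi\mu^\phi(v)$ coming from the moving endpoint. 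The boundary pieces already have the form appearing in \eqref{equation:threshold_mdp}, so all that remains is to show the interior pieces, after recursive expansion, supply the factor $2M|\mathcal{V}|\bar\rho(\mu^\phi(v),v)$.

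Next I would iterate the Bellman identity
\begin{equation*}
Q_{\mu^\phi}(\lambda,v,a) = \bar r(\lambda,v,a) + \gamma\sum_{v'}\int P(\lambda',v'\mid\lambda,v,a)\,V_{\mu^\phi}(\lambda',v')\,d\lambda',
\end{equation*}
with $V_{\mu^\phi}(\lambda,v):=Q_{\mu^\phi}(\lambda,v,\mathds{1}(\mu^\phi(v)>\lambda))$. Because $\bar r$ does not depend on $\phi$ at a fixed action, $\nabla_\phi Q_{\mu^\phi}(\lambda,v,a)$ equals $\gamma$ times the next-state expectation of $\nabla_\phi V_{\mu^\phi}(\lambda',v')$; applying the same split-at-threshold argument to each $\nabla_\phi V_{\mu^\phi}(\lambda',v')$ yields yet another boundary term located at $(\mu^\phi(v'),v')$ plus a deeper interior term. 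Unrolling the recursion, the boundary contribution associated with each $v$ is accumulated over every trajectory that ever visits $(\mu^\phi(v),v)$, weighted by $\gamma^t$; this is exactly the definition of the discounted state distribution $\rho_{\mu^\phi}$. Finally, since the outer integral $\int_{-M}^{+M}\sum_v d\lambda$ corresponds to a uniform initial distribution on $[-M,+M]\times\mathcal{V}$ whose total mass is $2M|\mathcal{V}|$, the aggregated discounted visitation at $(\mu^\phi(v),v)$ is $2M|\mathcal{V}|\bar\rho(\mu^\phi(v),v)$, giving \eqref{equation:threshold_mdp}.

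The main technical obstacle is justifying the interchange of differentiation and integration despite the indicator inside $Q_{\mu^\phi}(\cdot,\cdot,\mathds{1}(\mu^\phi(v)>\lambda))$. The indicator is discontinuous in $\phi$ only on the measure-zero set $\{\lambda=\mu^\phi(v)\}$, and the assumption that the values $\mu^\phi(v)$ are distinct ensures that at most one $v$ contributes a singular term at any given $\lambda$, so the split-integral representation is unambiguous and Leibniz applies cleanly at each recursive step. The recursive unrolling additionally needs the geometric series in $\gamma$ to converge, which follows from the boundedness of $[-M,+M]\times\mathcal{V}$ together with boundedness of $\bar r$; once these regularity issues are handled, everything else is bookkeeping.
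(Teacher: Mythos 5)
Your proposal is correct and follows essentially the same route as the paper's proof: Leibniz-rule boundary terms at the threshold give the $Q(\cdot,1)-Q(\cdot,0)$ difference, and recursively unrolling the Bellman equation turns the interior terms into the discounted visitation factor $2M|\mathcal{V}|\bar\rho$. The only cosmetic difference is that you split each $v$'s integral at its own threshold $\mu^\phi(v)$, whereas the paper partitions $[-M,+M]$ at all thresholds simultaneously and lets the extra boundary terms telescope away; both yield the same boundary contribution.
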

\begin{proof}
\vspace{-1em}
Let $\bar{\rho}_t(\lambda, v)$ be the distribution that the state at time $t$ is $(\lambda, v)$ when the initial state is chosen uniformly at random. Clearly, we have $\bar{\rho}(\lambda, v)=\sum_{t=1}^\infty \gamma^{t-1}\bar{\rho}_t(\lambda, v)$.
Given $\phi$, we number all states in $\mathcal{V}$ such that $\mu^\phi(v^1)>\mu^\phi(v^2)>\dots$. Let $\mathbb{M}^0=+M$, $\mathbb{M}^n=\mu^\phi(v^n)$, for all $1\leq n\leq |\mathcal{V}|$, and $\mathbb{M}^{|\mathcal{V}|+1}=-M$. Also, let $\mathbb{V}^n$ be the subset of states $\{v | \mu^\phi(v)>\mathbb{M}^n\}=\{v^1, v^2,\dots, v^{n-1}\}$.
Now, consider the interval $(\mathbb{M}^{n+1}, \mathbb{M}^n)$ for some $n$. Notice that, for all $\lambda\in (\mathbb{M}^{n+1}, \mathbb{M}^n)$, $\mathds{1}(\mu^\phi(v) > \lambda)=1$ if and only if $v\in \mathbb{V}^{n+1}$. 
In other words, for any vector state $v$, the threshold policy would take the same action under all $\lambda \in (\mathbb{M}^{n+1}, \mathbb{M}^n)$, and we use $\pi^{n+1}(v)$ to denote this action. 
We then have
\begin{align}
    & \nabla_\phi K(\mu^\phi) = \nabla_\phi\int_{\lambda = -M}^{\lambda = +M} \sum_{v\in\mathcal{V}}Q_{\mu^\phi}(\lambda, v, \mathds{1}(\mu^\phi(v) > \lambda)) d\lambda = \sum_{v\in \mathcal{V}}\nabla_\phi\int_{\lambda=-M}^{\lambda=+M}Q_{\mu^\phi}(\lambda, v,\mathds{1}(\mu^\phi(v) > \lambda))d\lambda \nonumber\\
    =&\sum_{v\in \mathcal{V}}\sum_{n=0}^{|\mathcal{V}|}\nabla_\phi\int_{\lambda=\mathbb{M}^{n+1}}^{\lambda=\mathbb{M}^n}Q_{\mu^\phi}(\lambda, v,\pi^{n+1}(v))d\lambda\nonumber \nonumber\\
    =&\sum_{v\in \mathcal{V}}\sum_{n=0}^{|\mathcal{V}|}
    \Bigg(Q_{\mu^\phi}\big(\mathbb{M}^{n}, v,\pi^{n+1}(v)\big)\nabla_\phi \mathbb{M}^{n}-Q_{\mu^\phi}\big(\mathbb{M}^{n+1},v,\pi^{n+1}(v)\big)\nabla_\phi \mathbb{M}^{n+1}+\int_{\lambda=\mathbb{M}^{n+1}}^{\lambda=\mathbb{M}^n}\nabla_\phi Q_{\mu^\phi}(\lambda, v,\pi^{n+1}(v))d\lambda\Bigg),\label{eq:policy_gradient_MDP:gradientK}
\end{align}
where the summation-integration swap in the first equation follows the  Fubini-Tonelli theorem and the last step follows the Leibniz integral rule. We simplify the first two terms in the last step by
\begin{align}
&\sum_{v\in \mathcal{V}}\sum_{n=0}^{|\mathcal{V}|}
    \Bigg(Q_{\mu^\phi}\big(\mathbb{M}^{n}, v,\pi^{n+1}(v)\big)\nabla_\phi \mathbb{M}^{n}-Q_{\mu^\phi}\big(\mathbb{M}^{n+1},v,\pi^{n+1}(v)\big)\nabla_\phi \mathbb{M}^{n+1}\Bigg)\nonumber\\
    =&\sum_{v\in\mathcal{V}}\sum_{n=1}^{|\mathcal{V}|} \Big( Q_{\mu^\phi}\big(\mu^\phi(v^n), v,\mathds{1}(v\in \mathbb{V}^{n+1})\big)-Q_{\mu^\phi}\big(\mu^\phi(v^n),v,\mathds{1}(v\in \mathbb{V}^{n})\big)\Big)\nabla_\phi\mu^\phi(v^n) \nonumber\\
    =& 2M|\mathcal{V}|\sum_{v\in\mathcal{V}} \bar{\rho}_1(\mu^\phi(v),v) \Big( Q_{\mu^\phi}\big(\mu^\phi(v), v,1\big) 
     - Q_{\mu^\phi}\big(\mu^\phi(v), v,0\big)\Big) \nabla_\phi\mu^\phi(v). \label{eq:policy_gradient_MDP:further}
\end{align}

Next, we expand the last term in~\eqref{eq:policy_gradient_MDP:gradientK}. Note that $Q_{\mu^\phi}(\lambda, v,a)=\bar{r}(\lambda, v,a) + \gamma\int_{\lambda'=-M}^{\lambda'=+M}\sum_{v'}p(\lambda',v'|\lambda, v,a)Q_{\mu^\phi}(\lambda', v',\mathds{1}(\mu^\phi(v')>\lambda'))d\lambda'$, where $p(\cdot|\cdot)$ is the transition probability. Hence, $\nabla_\phi Q_{\mu^\phi}(\lambda, v,a) = \gamma\nabla_\phi\int_{\lambda'=-M}^{\lambda'=+M}\sum_{v'}p(\lambda',v'|\lambda, v,a)Q_{\mu^\phi}(\lambda', v',\mathds{1}(\mu^\phi(v')>\lambda'))d\lambda'$. Using the same techniques in~\eqref{eq:policy_gradient_MDP:gradientK} and~\eqref{eq:policy_gradient_MDP:further}, we have
\begin{align*}
    &\sum_{v\in \mathcal{V}}\sum_{n=0}^{|\mathcal{V}|}
    \int_{\lambda=\mathbb{M}^{n+1}}^{\lambda=\mathbb{M}^n}\nabla_\phi Q_{\mu^\phi}(\lambda, v,\pi^{n+1}(v))d\lambda=\sum_{v\in \mathcal{V}}\int_{\lambda=-M}^{\lambda=+M}\nabla_\phi Q_{\mu^\phi}(\lambda, v,\mathds{1}(\mu^\phi(v)>\lambda))d\lambda\\
    &=      \gamma\sum_{v\in \mathcal{V}}\int_{\lambda=-M}^{\lambda=+M}\Big(\nabla_\phi\int_{\lambda'=-M}^{\lambda'=+M}\sum_{v' \in \mathcal{V}}p(\lambda',v'|\lambda, v,\mathds{1}(\mu^\phi(v)>\lambda))Q_{\mu^\phi}(\lambda', v',\mathds{1}(\mu^\phi(v')>\lambda'))d\lambda'\Big)d\lambda\\
    &=2M|\mathcal{V}| \sum_{v\in\mathcal{V}} \gamma\bar{\rho}_2(\mu^\phi(v),v) \Big( Q_{\mu^\phi}\big(\mu^\phi(v), v,1\big) 
     - Q_{\mu^\phi}\big(\mu^\phi(v), v,0\big)\Big) \nabla_\phi\mu^\phi(v)\\
     &+\gamma \sum_{v\in \mathcal{V}}\int_{\lambda=-M}^{\lambda=+M}\Big(\sum_{v' \in \mathcal{V}}\int_{\lambda' =-M}^{\lambda'=+M}\nabla_\phi \big(p(\lambda',v'|\lambda, v, \mathds{1}(\mu^\phi(v)>\lambda))Q_{\mu^\phi}(\lambda, v,\mathds{1}(\mu^\phi(v')>\lambda'))\big)d\lambda'\Bigg)d\lambda.
\end{align*}
 In the above equation, expanding the last term in time establishes~\eqref{equation:threshold_mdp}.
\end{proof}

\vspace{-1em}
\subsection{DeepTOP Algorithm Design for MDPs} \label{subsec:deeptop_mdp_algo}

Motivated by Theorem~\ref{theorem:policy_gradient_MDP}, we now present DeepTOP-MDP, a model-free, actor-critic Deep RL algorithm.
DeepTOP-MDP maintains an actor network with parameters $\phi$ that learns a threshold function $\mu^\phi(v)$, and a critic network with parameters $\theta$ that learns an action-value function $Q^\theta(\lambda, v, a)$.
DeepTOP-MDP also maintains a target critic network with parameters $\theta'$ that is updated slower than the critic parameters $\theta$.
The purpose of the target critic network is to improve the learning stability as demonstrated in \cite{fujimoto2018, lillicrap2015}. 
The objective of the critic network is to find $\theta$ that minimizes the loss function
\begin{equation} \label{equation:critic_loss_expected_mdp}
\mathcal{L}(\theta) := \mathop{\mathbb{E}}\limits_{s_t, a_t, r_t, s_{t+1}} \Big[\bigg(Q^\theta(\lambda_t, v_t, a_t) - r_t - \gamma \max\limits_{a' \in \mathcal{A}} Q^{\theta'}\big(\lambda_{t+1}, v_{t+1}, a'\big)\bigg)^2\Big],
\end{equation}

where $(s_t, a_t, r_t, s_{t+1})$ is sampled under some policy with $s_t = (\lambda_t, v_t)$.
The objective of the actor network is to find $\phi$ that maximizes $\int_{\lambda = -M}^{\lambda = +M} \sum_{v\in\mathcal{V}}Q^\theta_{\mu^\phi}\Big(\lambda, v, \mathds{1}(\mu^\phi(v) > \lambda)\Big) d\lambda$.
In each timestep $t$, the environment $\mathcal{E}$ provides a state $s_t$ to the agent.
We set an exploration parameter $\epsilon_t \in [0,1)$ that takes a random action with probability $\epsilon_t$.
Otherwise, DeepTOP-MDP calculates $\mu^\phi(v_t)$ based on $v_t$, and chooses $a_t = \mathds{1}(\mu^\phi(v_t) > \lambda_t)$.
$\mathcal{E}$ generates a reward $r_t$ and a next state $s_{t+1}$.
A replay memory denoted by $\mathcal{M}$ then stores the transition $\{s_t, a_t, r_t, s_{t+1}\}$.
After filling the memory with at least $B$ transitions, DeepTOP-MDP updates the parameters $\phi,\theta,\theta'$ in every timestep using a sampled minibatch of size $B$ of transitions $\{s_{t_k}, a_{t_k},r_{t_k}, s_{t_k + 1}\}$, for $1 \leq k \leq B$.
The critic network uses the sampled transitions to calculate the estimated gradient of $\mathcal{L}(\theta)$:
\begin{equation} \label{equation:mdp_critic_gradient}
   \hat{\nabla}_\theta \mathcal{L}(\theta) :=  \frac{2}{B}\sum\limits_{k=1}^{B}\bigg(Q^\theta(\lambda_{t_k}, v_{t_k}, a_{t_k}) - r_{t_k} - \gamma \max\limits_{a' \in \mathcal{A}}Q^{\theta'}\big(\lambda_{t_k+1}, v_{t_k+1}, a'\big)\bigg) \nabla_{\theta}Q^\theta(\lambda_{t_k}, v_{t_k}, a_{t_k}).
\end{equation}

Similarly, the actor network uses the sampled transitions and Equation~\eqref{equation:threshold_mdp} to calculate the estimated gradient:
\begin{equation} \label{equation:estimate_mdp_gradient}
\hat{\nabla}_\phi K(\mu^\phi) := \frac{1}{B} \sum\limits_{k=1}^{B} \Bigg(Q_{\mu^\phi}^\theta\Big(\mu^\phi(v_{t_k}),v_{t_k}, 1\Big) - Q_{\mu^\phi}^\theta\Big(\mu^\phi(v_{t_k}),v_{t_k},0\Big)\Bigg) \nabla_\phi \mu^\phi(v_{t_k}).
\end{equation}

Both the critic network and the actor network then take a gradient update step.
Finally, we soft update the target critic's parameters $\theta'$ using 
$\theta' \leftarrow  \tau \theta + (1 - \tau) \theta',$
with $\tau < 1$.
The complete pseudocode is given in Algorithm~\ref{alg:threshold_mdp}.

\begin{algorithm}[tb]
   \caption{Deep Threshold Optimal Policy Training for MDPs (DeepTOP-MDP)}
   \label{alg:threshold_mdp}
\begin{algorithmic}
\STATE Randomly select initial actor network parameters $\phi$ and critic network parameters $\theta$.
\STATE Set target critic network parameters $\theta' \leftarrow \theta$, and initialize replay memory $\mathcal{M}$.
   \FOR{timestep $t =1, 2, 3, \hdots$}
   \STATE Receive state $s_t = (\lambda_t,v_t)$ from environment $\mathcal{E}$.
   \STATE Select action $a_t = \mathds{1}(\mu^\phi(v_t) > \lambda_t)$ with probability $1 - \epsilon_t$. Otherwise, select action $a_t$ randomly.
   \STATE Execute action $a_t$, and observe reward $r_t$ and next state $s_{t+1}$ from $\mathcal{E}$.
   \STATE Store transition $\{s_t, a_t,r_t, s_{t+1}\}$ into $\mathcal{M}$.
   \STATE Sample a minibatch of $B$ transitions $\{s_{t_k}, a_{t_k},r_{t_k}, s_{t_k + 1}\}$, for $1 \leq k \leq B$ from $\mathcal{M}$.
   \STATE Update critic network parameters $\theta$ using the estimated gradient from Equation~\eqref{equation:mdp_critic_gradient}.
   \STATE Update actor network parameters $\phi$ using the estimated gradient from Equation~\eqref{equation:estimate_mdp_gradient}.
   
   \STATE Soft update target critic parameters $\theta'$: $\theta' \leftarrow  \tau \theta + (1 - \tau) \theta'$.
   \ENDFOR
\end{algorithmic}
\end{algorithm}

\vspace{-1em}
\section{Whittle Index Policy for RMABs} \label{sec:problem_rmabs}
\vspace{-1em}

In this section, we demonstrate how the Whittle index policy \cite{whittle1988}, a powerful tool for solving the notoriously intractable Restless Multi-Armed Bandit (RMAB) problem, can be represented with a set of threshold functions.
We first describe the RMAB control problem, and then define the Whittle index function. 

An RMAB problem consists of $N$ arms.
The environment of an arm $i$, denoted as $\mathcal{E}_i$, is an MDP with a discrete state space $s_{i,t}\in \mathcal{S}_i$, and a binary action space $a_{i,t}\in \mathcal{A} := \{0,1\}$, where $a_{i,t} = 1$ means that arm $i$ is activated, and $a_{i,t} = 0$ means that arm $i$ is left passive at time $t$.
Given the state-action pair $(s_{i,t}, a_{i,t})$, $\mathcal{E}_i$ generates a random reward $r_{i,t}$ and a random next state $s_{i,t+1}$ following some unknown probability distributions based on $(s_{i,t}, a_{i,t})$. 
Here we also use $\bar{r}_i(s_{i}, a_{i})$ to denote the unknown expected one-step reward that can be obtained for the state-action pair $(s_{i}, a_{i})$.

A control policy over all arms takes the states $(s_{1,t}, s_{2,t}, \hdots, s_{N,t})$ as input, and activates $V$ out of $N$ arms in every timestep.
Solving for the optimal control policy for RMABs was proven to be intractable \cite{papadimitriou1999}, since the agent must optimize over an input state space exponential in $N$.
To circumvent the dimensionality challenge, the Whittle index policy assigns real values to an arm's states using a Whittle index function for each arm $W_i : \mathcal{S}_i \rightarrow \varmathbb{R}$.
Based on the assigned Whittle indices $\big(W_1(s_{1,t}), W_2(s_{2,t}), \hdots, W_N(s_{N,t})\big)$, the Whittle index policy activates the $V$ highest-valued arms out of $N$ arms in timestep $t$, and picks the passive action for the remaining arms.

\vspace{-1em}
\subsection{The Whittle Index Function as The Optimal Threshold Function}

To define the Whittle index and relate it to threshold functions, let us first consider an alternative control problem of a single arm $i$ as environment $\mathcal{E}_i$ with \emph{activation cost} $\lambda$.
In this problem, the agent follows a control policy that determines whether the arm is activated or not based on its current state $s_{i,t}$.
If the policy activates the arm, then the agent must pay an activation cost of $\lambda$.
Hence, the agent's \emph{net reward} at timestep $t$ is defined as $r_{i,t} - \lambda a_{i,t}$.

We now consider applying threshold policies for this alternative control problem. A threshold policy defines a threshold function $\mu_i : \mathcal{S}_i \rightarrow \varmathbb{R}$ that maps each state to a real value. It then activates the arm if and only if $\mu_i(s_{i,t}) > \lambda$, i.e., $a_{i,t}=\mathds{1}(\mu_i(s_i) > \lambda)$. The value of $\mu_i(s_{i,t})$ can therefore be viewed as the largest activation cost that the agent is willing to pay to activate the arm under state $s_{i,t}$.
To characterize the performance of a threshold policy with a threshold function $\mu_i(\cdot)$, we let $\rho_{\mu_i, \lambda}(s_i', s_i)$ be the discounted state distribution, which is the average discounted number of visits of state $s_i'$ when the initial state is $s_i$ under the threshold policy and $\lambda$. 
When the initial state is $s_{i}$, the expected discounted net reward under the threshold policy is
\begin{align} \label{equation:q_expression}
    Q_{i,\lambda}\big(s_{i},\mathds{1}(\mu_i(s_i) > \lambda)\big)=\sum_{s_i'\in \mathcal{S}_i}\rho_{\mu_i, \lambda}(s_i', s_i)\Big(\bar{r}_i\big(s_i', \mathds{1}(\mu_i(s_i') > \lambda)\big)-\lambda \mathds{1}(\mu_i(s_i') > \lambda)\Big).
\end{align}

The performance of the threshold policy under a given $\lambda$ is defined as $J_{i,\lambda}(\mu_i):=\sum_{s_{i}\in \mathcal{S}_i} Q_{i,\lambda}\big(s_{i},\mathds{1}(\mu_i(s_i) > \lambda)\big)$. The Whittle index of this arm is defined as the function $\mu_i(\cdot)$ whose corresponding threshold policy maximizes $J_{i,\lambda}(\mu_i)$ for all $\lambda$:

\begin{definition} (Whittle Index)  \label{definition:optimal_rmab}
If there exists a function $\mu_i: \mathcal{S}_i \rightarrow \varmathbb{R}$ such that choosing $\mathds{1}(\mu_i(s_i)>\lambda)$ maximizes $J_{i,\lambda}(\mu_i)$ for all $\lambda\in(-\infty, +\infty)$, then we say that $\mu_i(s_i)$ is the Whittle index $W_i(s_i)$ \footnote{To simplify notations, we use a necessary and sufficient condition for the Whittle index as its definition. We refer interested readers to \cite{gittins2011} for more thorough discussions on the Whittle index.}.
\end{definition}

We note that, for some arms, there does not exist any function $\mu_i(s_i)$ that satisfies the condition in Definition~\ref{definition:optimal_rmab}.
For such arms, the Whittle index does not exist. 
We say that an arm is \emph{indexable} if it has a well-defined Whittle index function.
Definition~\ref{definition:optimal_rmab} shows that finding the Whittle index is equivalent to finding the optimal $\mu_i(\cdot)$ that maximizes $J_{i,\lambda}(\mu_i)$ for all $\lambda\in(-\infty, +\infty)$. Parameterizing a threshold function $\mu_i^{\phi_i}(\cdot)$ by parameters $\phi_i$ and letting $M$ be a sufficiently large number such that $\mu_i^{\phi_i}(s_i)\in(-M,+M)$ for all $s_i$ and $\phi_i$, we aim to find the optimal $\phi_i$ for maximizing the objective function
\begin{align}
    K_i(\mu_i^{\phi_i}):= \int_{\lambda=-M}^{\lambda=+M} \sum_{s_{i}\in \mathcal{S}_i} Q_{i,\lambda}\big(s_{i},\mathds{1}(\mu_i^{\phi_i}(s_i) > \lambda)\big)d\lambda. \label{equation:Whittle objective}
\end{align}

\vspace{-1.25em}
\section{Deep Threshold Optimal Policy for RMABs} \label{sec:deeptop_rmab}
\vspace{-0.5em}
To design a DeepTOP variant for RMABs, we first give the gradient of the objective function. 

\begin{theorem} \label{theorem:policy_gradient_whittle}
Given the parameter vector $\phi_i$, let $\bar{\rho}_{
\lambda}(s_i)$ be the discounted state distribution when the initial state is chosen uniformly at random and the activation cost is $\lambda$.
If all states $s_i\in \mathcal{S}_i$ have distinct values of $\mu_i^{\phi_i}(s_i)$, then, 
\begin{align} 
    \nabla_{\phi_i} K_i(\mu_i^{\phi_i}) = |\mathcal{S}_i| \sum_{s_i\in\mathcal{S}_i} \bar{\rho}_{\mu_i^{\phi_i}(s_i)}(s_i)\Big( Q_{i,\mu_i^{\phi_i}(s_i)}\big(s_i,1\big)-Q_{i,\mu_i^{\phi_i}(s_i)}\big(s_i,0)\Big)\nabla_{\phi_i}\mu_i^{\phi_i}(s_i).  \label{equation:deterministic_policy}
\end{align}
\end{theorem}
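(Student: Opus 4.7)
The plan is to mirror the proof of Theorem~\ref{theorem:policy_gradient_MDP}, with the key change that $\lambda$ now enters the problem as a fixed activation-cost parameter in the one-step reward rather than as a coordinate of the state. First I would enumerate the states in $\mathcal{S}_i$ so that $\mu_i^{\phi_i}(s_i^1)>\mu_i^{\phi_i}(s_i^2)>\dots$ (possible by the distinctness assumption), set breakpoints $\mathbb{M}^0=+M$, $\mathbb{M}^n=\mu_i^{\phi_i}(s_i^n)$ for $1\leq n\leq |\mathcal{S}_i|$, $\mathbb{M}^{|\mathcal{S}_i|+1}=-M$, and write $\mathbb{V}^n=\{s_i^1,\dots,s_i^{n-1}\}$. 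On each interval $(\mathbb{M}^{n+1},\mathbb{M}^n)$ the indicator $\mathds{1}(\mu_i^{\phi_i}(s_i)>\lambda)$ is constant in $\lambda$ for every $s_i$, yielding a constant action $\pi^{n+1}(s_i)=\mathds{1}(s_i\in\mathbb{V}^{n+1})$. After swapping sum and integral by Fubini--Tonelli, $\nabla_{\phi_i}K_i(\mu_i^{\phi_i})$ decomposes into a finite sum of gradients of integrals whose endpoints and integrand both depend on $\phi_i$.

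Applying the Leibniz rule on each subinterval produces boundary terms $Q_{i,\mathbb{M}^n}(s_i,\pi^{n+1}(s_i))\nabla_{\phi_i}\mathbb{M}^n - Q_{i,\mathbb{M}^{n+1}}(s_i,\pi^{n+1}(s_i))\nabla_{\phi_i}\mathbb{M}^{n+1}$ together with an interior term $\int_{\mathbb{M}^{n+1}}^{\mathbb{M}^n}\nabla_{\phi_i}Q_{i,\lambda}(s_i,\pi^{n+1}(s_i))\,d\lambda$. Telescoping the boundary contributions across $n$, the outer endpoints $\pm M$ vanish because they have zero gradient, and each intermediate $\mathbb{M}^n$ picks up the difference of the actions $\pi^{n+1}$ and $\pi^n$, which coincide everywhere except at $s_i=s_i^n$; thus only this state survives the sum over $\mathcal{S}_i$ and contributes $\bigl(Q_{i,\mu_i^{\phi_i}(s_i^n)}(s_i^n,1)-Q_{i,\mu_i^{\phi_i}(s_i^n)}(s_i^n,0)\bigr)\nabla_{\phi_i}\mu_i^{\phi_i}(s_i^n)$. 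Rewriting the coefficient $1$ as $|\mathcal{S}_i|\,\bar{\rho}_{\mu_i^{\phi_i}(s_i^n),1}(s_i^n)$, using that a uniform-over-$\mathcal{S}_i$ initial state gives time-$1$ mass $1/|\mathcal{S}_i|$, recovers the $t=1$ contribution of \eqref{equation:deterministic_policy}.

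For the interior $\nabla_{\phi_i}Q_{i,\lambda}$ terms, I would apply the Bellman expansion $Q_{i,\lambda}(s_i,a)=\bar{r}_i(s_i,a)-\lambda a+\gamma\sum_{s_i'}p(s_i'\mid s_i,a)Q_{i,\lambda}(s_i',\mathds{1}(\mu_i^{\phi_i}(s_i')>\lambda))$. Since $\bar{r}_i$, $\lambda a$, and the transition kernel $p$ are independent of $\phi_i$, the gradient $\nabla_{\phi_i}$ commutes into the inner $Q_{i,\lambda}$, and the same interval-partitioning/Leibniz argument on the next Bellman layer produces $\gamma\bar{\rho}_{\lambda,2}(s_i)$ times the same advantage. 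Iterating this unroll generates the series $\sum_{t=1}^\infty\gamma^{t-1}\bar{\rho}_{\lambda,t}(s_i)=\bar{\rho}_\lambda(s_i)$, which assembles into \eqref{equation:deterministic_policy}.

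The main obstacle compared to Theorem~\ref{theorem:policy_gradient_MDP} is the bookkeeping around $\lambda$: because $\lambda$ is now an MDP parameter rather than a state coordinate, the discounted state distribution is a $\lambda$-indexed family $\bar{\rho}_\lambda$, and each telescoped boundary term evaluates it at its own $\lambda=\mu_i^{\phi_i}(s_i^n)$. One has to track carefully that each Bellman-unroll layer at time $t$ contributes $\bar{\rho}_{\lambda,t}$ at precisely that $\lambda$, and that the prefactor $2M$ appearing in Theorem~\ref{theorem:policy_gradient_MDP} is correctly replaced here by $|\mathcal{S}_i|$, which it is because $\lambda$ is no longer integrated out of the initial distribution. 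Once this bookkeeping is settled, the rest of the argument is mechanically analogous to Theorem~\ref{theorem:policy_gradient_MDP}.
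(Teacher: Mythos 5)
Your proposal is correct and follows essentially the same route as the paper's own proof in Appendix~\ref{appendix:rmab_theorem_proof}: the same ordering of states by $\mu_i^{\phi_i}$, the same interval partition with Fubini--Tonelli and the Leibniz rule, the same telescoping of boundary terms isolating the advantage at $\lambda=\mu_i^{\phi_i}(s_i^n)$, and the same Bellman unrolling to assemble $\bar{\rho}_\lambda=\sum_t\gamma^{t-1}\bar{\rho}_{t,\lambda}$. Your remark on why the prefactor becomes $|\mathcal{S}_i|$ rather than $2M|\mathcal{V}|$ matches the paper's treatment of the uniform initial distribution over $\mathcal{S}_i$.
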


\vspace{-1.5em}
\begin{proof}
The proof is similar to that of Theorem~\ref{theorem:policy_gradient_MDP}. For completeness, we provide it in Appendix~\ref{appendix:rmab_theorem_proof}.
\end{proof}

We note that Theorem~\ref{theorem:policy_gradient_whittle} does not require the arm to be indexable. 
Whether an arm is indexable or not, using Theorem~\ref{theorem:policy_gradient_whittle} along with a gradient ascent algorithm will find a locally-optimal $\phi_i$ that maximizes $K_i(\mu_i^{\phi_i})$. 
When the arm is indexable, the resulting threshold function $\mu_i^{\phi_i}$ is the Whittle index function.
Using the gradient result from Equation~\eqref{equation:deterministic_policy}, we present the algorithm DeepTOP-RMAB for finding the optimal parametrized threshold functions $\mu_i^{\phi_i}$ for arms $i = 1,2,\hdots, N$.
The training method is similar to the MDP version, except for two important differences. First, the training of each arm is done independently from others. Second, the value of $\lambda$ is an artificial value that only exists in the alternative problem but not in the original RMAB problem. 
Similar to DeepTOP-MDP, we maintain three network parameters for each arm $i$: actor $\phi_i$, critic $\theta_i$, and target-critic $\theta_i'$.
The critic network parametrizes the action-value function, and is optimized by minimizing the loss function 
\begin{equation} \label{equation:critic_loss_rmab}
    \mathcal{L}_i(\theta_i) := \int\limits_{\lambda = -M}^{\lambda = +M} \mathop{\mathbb{E}}\limits_{s_{i,t}, a_{i,t}, r_{i,t}, s_{i,t+1}} \Bigg[\Big( Q_{i,\lambda}^{\theta_i}(s_{i,t}, a_{i,t}) 
     - r_{i,t} - \gamma \max\limits_{a' \in \mathcal{A}}Q_{i,\lambda}^{\theta_i'} (s_{i,t+1}, a')\Big)^2 \Bigg] d\lambda,
\end{equation}

with $(s_{i,t}, a_{i,t}, r_{i,t}, s_{i,t+1})$ sampled under some policy.
In each timestep $t$, each arm environment $\mathcal{E}_i$ provides its current state $s_{i,t}$ to the agent.
For each arm $i=1,2,\hdots,N$, DeepTOP-RMAB calculates the state value $\mu_i^{\phi_i}(s_{i,t})$ with the arm's respective actor network parameters $\phi_i$. Given an exploration parameter $\epsilon_t \in [0,1)$, DeepTOP-RMAB activates the $V$ arms with the largest $\mu_i^{\phi_i}(s_{i,t})$ with probability $1-\epsilon_t$, and activates $V$ randomly selected arms with probability $\epsilon_t$. Based on the executed actions, each arm provides a reward $r_{i,t}$ and the next state $s_{i,t+1}$.
An arm's transition $\{s_{i,t}, a_{i,t}, r_{i,t}, s_{i,t+1}\}$ is then stored in the arm's memory denoted by $\mathcal{M}_i$.
After filling each arm's memory with at least $B$ transitions, DeepTOP-RMAB updates $\phi_i,\theta_i,$ and $\theta'_i$ in every timestep. For each arm $i$, DeepTOP-RMAB first samples a minibatch of size $B$ of transitions $\{s_{i,t_k}, a_{i,t_k},r_{i,t_k}, s_{i,t_k + 1}\}$, for $1 \leq k \leq B$ from the memory $\mathcal{M}_i$. It then randomly samples $B$ values $[\lambda_{i,1},\lambda_{i,2},\hdots, \lambda_{i,B}]$ from the range $[-M,+M]$.
Using the sampled transitions and $\lambda$ values, it estimates the gradient of $\mathcal{L}_i(\theta_i)$ as
\begin{equation} \label{equation:rmab_critic_loss_gradient}
   \hat{\nabla}_{\theta_i} \mathcal{L}_i(\theta_i) := \frac{2}{B} \sum\limits_{k=1}^{B} \Bigg( Q_{i,\lambda_k}^{\theta_i}(s_{i,{t_k}}, a_{i,{t_k}})
     - r_{i,{t_k}} - \gamma \max\limits_{a' \in \mathcal{A}}Q_{i,\lambda_k}^{\theta_i'} \Big(s_{i,{t_k}+1}, a'\Big)\Bigg) \nabla_{\theta_i}Q_{i,\lambda_k}^{\theta_i}(s_{i,{t_k}}, a_{i,{t_k}}).
\end{equation}

Using the sampled transitions and Equation~\eqref{equation:deterministic_policy}, it estimates the gradient of $K_i(\mu_i^{\phi_i})$ as
\begin{equation}\label{equation:rmab_actor_loss_gradient}
\hat{\nabla}_{\phi_i} K_i(\mu_i^{\phi_i}) := \frac{1}{B} \sum\limits_{k=1}^{B} \Bigg(Q_{i,\mu_i^{\phi_i}(s_{i,t_k})}^{\theta_i}\Big(s_{i,t_k}, 1\Big) - Q_{i,\mu_i^{\phi_i}(s_{i,t_k})}^{\theta_i}\Big(s_{i,t_k},0\Big) \Bigg) \nabla_{\phi_i} \mu_i^{\phi_i}(s_{i,t_k}).
\end{equation}

A gradient update step is taken after calculating the actor and critic networks' gradients. 
Finally, DeepTOP-RMAB soft updates the target critic parameters $\theta'_i$ using
$\theta'_i \leftarrow  \tau \theta_i + (1 - \tau) \theta'_i,$
with $\tau < 1$.
The complete DeepTOP-RMAB pseudocode is given in Appendix~\ref{app:rmab_algo}.

\vspace{-0.5em}
\section{Simulations} \label{sec:experiments}

We have implemented and tested both DeepTOP-MDP and DeepTOP-RMAB in a variety of settings. The training procedure of the two DeepTOP algorithms are similar to that of the DDPG \cite{lillicrap2015} algorithm except for the expression of gradients. We implemented the DeepTOP algorithms by modifying an open-source implementation of DDPG \cite{horng_ddpg}. All source code can be found in the repository \url{https://github.com/khalednakhleh/deeptop}.

\vspace{-0.5em}
\subsection{Simulations for MDPs}

We evaluate three MDPs, namely, the electric vehicle charging problem, the inventory management problem, and the make-to-stock problem.
\vspace{-0.5em}
\paragraph{EV charging problem.} This problem is based on Yu, Xu, and Tong \cite{yu2018}. It considers a charging station serving EVs. When an EV arrives at the station, it specifies the amount of charges it needs and a deadline upon which it will leave the station. The electricity price changes over time and we model it by an Ornstein-Uhlenbeck process \cite{uhlenbeck1930}. In each timestep, the station decides whether to charge the EV or not. If it decides to charge the EV, then it provides one unit charge to the EV. The station then obtains a unit reward and pays the current electricity price. If the station fails to fully charge the EV by the deadline of the EV, then the station suffers from a penalty that is a convex function of the remaining needed charge. A new EV arrives at the station when the previous EV leaves.
We model this problem by letting the scalar state be the current electricity price and the vector state be the remaining needed charge and time-to-deadline of the current EV. A threshold policy is one that calculates a threshold based on the EV's remaining needed charge and time-to-deadline, and then decides to charge the EV if and only if the current electricity price is below the threshold. 
\vspace{-0.5em}
\paragraph{Inventory management problem.} We construct an inventory management problem by jointly incorporating a variety of practical challenges, including seasonal fluctuations in demands and lead times in orders, in the literature \cite{schwarz2006queueing, jakvsivc2015optimal, goyal2003production, san2021profit}. We consider a warehouse holding goods. In each timestep, there is a random amount of demand whose mean depends on the time of the year. The warehouse can fulfill the demand as long as it has sufficient inventory, and it makes a profit for each unit of sold goods. At the end of the timestep, the warehouse incurs a unit holding cost for each unit of unsold goods. The warehouse manager needs to decide whether to order more goods. When it places an order for goods, there is a lead time of one time step, that is, the goods ordered at timestep $t$ are only available for sale at timestep $t+1$.
We model this problem by letting the scalar state be the current inventory and the vector state be the time of the year. A threshold policy calculates a threshold based on the time of the year and decides to place an order for goods if the current inventory is below the threshold.
\vspace{-0.5em}
\paragraph{Make-to-stock production problem.} This problem is considered in \cite{salmut2021}. It studies a system that produces $m$ items with $W$ demand classes and buffer size $S$. Accepting a class $v$ order leads to a reward $R_v$, as long as there is still room in the buffer for the order. The classes of demands are ordered such that $R_1 > R_2>\dots$. In this problem, the scalar state is the number of accepted but unfinished orders and the vector state is the class of the next arriving order. 
More details about the three MDPs can be found in Appendix~\ref{app:mdp_prob_desc}.

\begin{figure*}[b] 
\vspace{-1em}
\centering
\captionsetup[subfloat]{captionskip=-10pt}
\includegraphics[width=\linewidth]{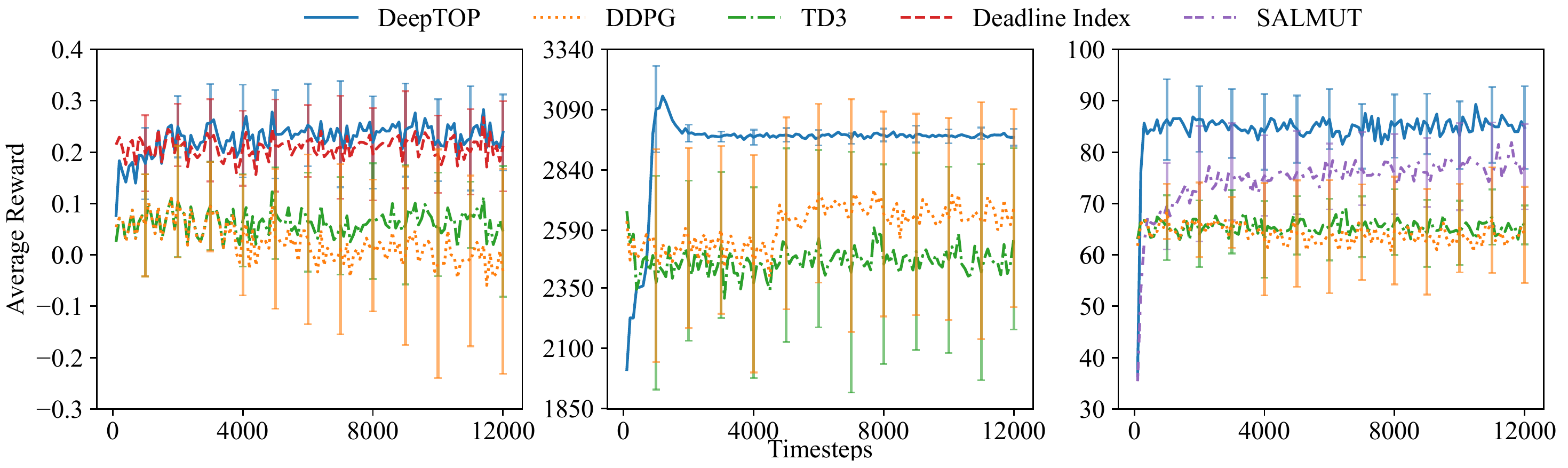}
\subfloat[EV charging.]{\hspace{0.38\linewidth} \label{fig:charging}}
\subfloat[Inventory management.]{\hspace{0.27\linewidth} \label{fig:inventory}}
\subfloat[Make-to-stock production.]{\hspace{0.37\linewidth} \label{fig:mts_prod}}
\caption{Average reward results for the MDP problems.}
\label{fig:mdp_probs}
\end{figure*}
\vspace{-0.5em}
\paragraph{Evaluated policies.} We compare DeepTOP-MDP against DDPG \cite{lillicrap2015} and TD3 \cite{fujimoto2018}, two state-of-the-art off-policy and model free deep RL algorithms. We use open-source implementations of these two algorithms for \cite{horng_ddpg, fujimoto_td3}. We use the same hyper-parameters, including the neural network architecture, learning rates, etc., for all three algorithms. 
We also evaluate the Structure-Aware Learning for Multiple Thresholds algorithm (SALMUT) \cite{salmut2021}, a reinforcement learning algorithm that finds the optimal threshold policy. SALMUT requires the vector states to be pre-sorted by their threshold values. Hence, SALMUT can only be applied to the make-to-stock production problem.
Details about the training parameters can be found in Appendix~\ref{app:experiment_details}. For the EV charging problem, Yu, Xu, and Tong \cite{yu2018} has found the optimal threshold policy. We call the optimal threshold policy the \emph{Deadline Index} policy and compare DeepTOP-MDP against it.
\vspace{-0.5em}
\paragraph{Simulations results.} Simulation results of the three MDPs are shown in Figure~\ref{fig:mdp_probs}. The results are the average of 20 independent runs. Before starting a run, we fill an agent's memory with $1000$ transitions by randomly selecting actions.
We plot the average reward obtained from the previous $100$ timesteps, and average them over $20$ runs. 
In addition, we provide the standard deviation bounds from the average reward.

It can be observed that DeepTOP significantly outperforms DDPG, TD3, and SALMUT. Although the training procedure of DeepTOP is similar to that of DDPG, DeepTOP is able to achieve much faster learning by leveraging the monotone property. Without leveraging the monotone property, DDPG and TD3 need to learn the optimal policy for each scalar state independently, and therefore have much worse performance. DeepTOP performs better than SALMUT because DeepTOP directly employs the threshold policy gradient. SALMUT in contrast approximates threshold policies through randomized policies since it can only handle continuous and differentiable functions. We believe this might be the reason why DeepTOP outperforms SALMUT. We also note that DeepTOP performs virtually the same as the Deadline Index policy for the EV charging problem in about $2000$ timesteps, suggesting that DeepTOP indeed finds the optimal threshold policy quickly.
We also evaluate DeepTOP for different neural network architectures in Appendix~\ref{app:add_mdp_results}, and show that DeepTOP performs the best in all settings.

\vspace{-0.5em}
\subsection{Simulations for RMABs}

\begin{wrapfigure}{r}{0.55\textwidth}
\vspace{-1em}
\begin{center}
\includegraphics[width=0.55\textwidth]{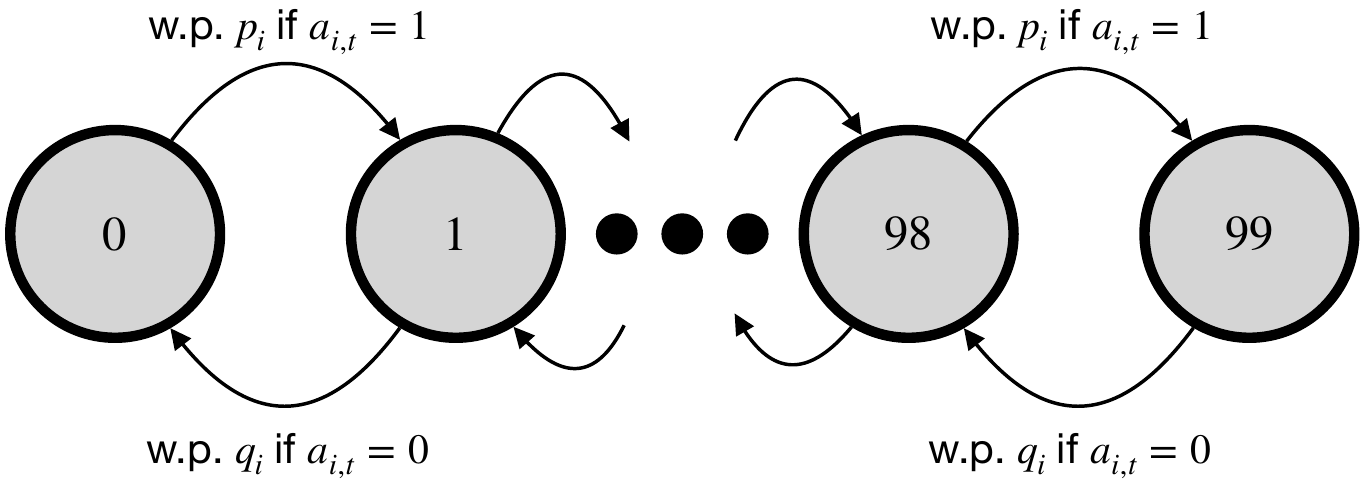} 
\end{center}
\caption{Arm $i$ as a Markov process with $100$ states and transition probabilities $p_i$ and $q_i$.}
\label{fig:markov_proc}
\end{wrapfigure}

We evaluate two RMABs, namely, the one-dimensional bandits from \cite{killian2021} and the recovering bandits from \cite{nakhleh2021}.
\vspace{-0.5em}
\paragraph{One-dimensional bandits.} We consider an extension of the RMAB problem evaluated in Killian et al. \cite{killian2021}. Killian et al. \cite{killian2021} considers the case when each arm is a two-state Markov process. We extend it so that each arm is a Markov process with 100 states, numbered as $0,1,\dots, 99,$ as shown in Figure~\ref{fig:markov_proc} where state $99$ is the optimal state.

The reward of an arm depends on the distance between its current state and state $99$.
Suppose the current state of arm $i$ is $s_{i,t}$, then it generates a reward $r_{i,t}=1 - (\frac{s_{i,t} - 99}{99})^2.$
If the arm is activated, then it changes to state $s_{i,t+1} = \min\{s_{i,t}+1, 99\}$ with probability $p_i$. If the arm is not activated, then it changes to state $s_{i,t+1} = \max\{s_{i,t} - 1, 0\}$ with probability $q_i$.
In the simulations, we pick the probabilities $p_i$ to be evenly spaced depending on the number of arms $N$ from the interval $[0.2, 0.8]$.
We set the probabilities $q_i = p_i$.
We consider that there are $N$ arms and that the agent needs to activate $V$ arms in each timestep. We evaluate three settings of $(N,V)=(10,3), (20,5),$ and $(30,6)$.

\vspace{-0.5em}
\paragraph{Recovering bandits.} First introduced in \cite{recovering_2019}, we consider the case that studies the varying behavior of consumers over time. A consumer's interest in a particular product falls if the consumer clicks on its advertisement link. However their interest in the product would recover with time. 
The recovering bandit is modelled as an RMAB with each arm being the advertisement link.
The reward of playing an arm is given by a function $f\big(\min(z,z_{max})\big)$, with $z$ being the time since the arm was last played.

In our experiments, we consider arms with different reward functions, with the arm's state being the value $\min\{z, z_{max}\}$ and $z_{max} = 100$. We also evaluate recovering bandits on three settings of $(N,V)=(10,3), (20,5),$ and $(30,6)$. 
More details can be found in Appendix~\ref{app:recovering_details}.
\vspace{-0.5em}

\paragraph{Evaluated policies.} We compare DeepTOP-RMAB against three recent studies that aim to learn index policies for RMABs, namely, Lagrange policy Q learning (LPQL) \cite{killian2021}, Whittle index based Q learning (WIBQL) \cite{avrachenkov2020}, and neural Whittle index network (NeurWIN) \cite{nakhleh2021}. LPQL consists of three steps: First, it learns a Q function for each arm independently. Second, it uses the Q functions of all arms to determine a common Lagrangian. Third, it uses the Lagrangian to calculate the index of each arm. WIBQL is a two-timescale algorithm that learns the Whittle indices of indexable arms by updating Q values on the fast timescale, and index values on the slower timescale. 
NeurWIN is an off-line training algorithm based on REINFORCE that requires a simulator to learn the Whittle index.
Both LPQL and WIBQL are tabular learning methods which may perform poorly compared to deep RL algorithms when the size of the state space is large. 
Hence, we also design deep RL equivalent algorithms that approximate their Q functions using neural networks.
We refer to the Deep RL extensions as neural LPQL and neural WIBQL. In all experiments, neural LPQL, neural WIBQL, and NeurWIN use the same hyper-parameters as DeepTOP-RMAB.
For the one-dimensional bandits, it can be shown that the Whittle index is in the range of $[-1,1]$, and hence we set $M=1$.
For the recovering bandits, we set $M=10$.
\vspace{-0.5em}
\paragraph{Simulation results.} Simulation results are shown in Figures~\ref{fig:rmab_prob} and~\ref{fig:recovering_bandits_results}. It can be observed that DeepTOP achieves the optimal average rewards in all cases. The reason that neural LPQL performs worse than DeepTOP may lie in its reliance on a common Lagrangian. Since the common Lagrangian is calculated based on the Q functions of all arms, an inaccuracy in one arm's Q function can result in an inaccurate Lagrangian, which, in turn, leads to inaccuracy in the index values of all arms. Prior work \cite{killian2021} has already shown that WIBQL performs worse than LPQL. Hence, it is not surprising that neural WIBQL performs worse than both neural LPQL and DeepTOP. NeurWIN performs worse than DeepTOP because it is based on REINFORCE and therefore can only apply updates at the end of each minibatch of episodes.
We also evaluate DeepTOP for different neural network architectures and the results are shown in Appendix~\ref{app:add_rmab_results} for the one-dimensional bandits and Appendix~\ref{app:add_recovering_results} for the recovering bandits.

\begin{figure*}[ht]
\centering
\includegraphics[width=\linewidth]{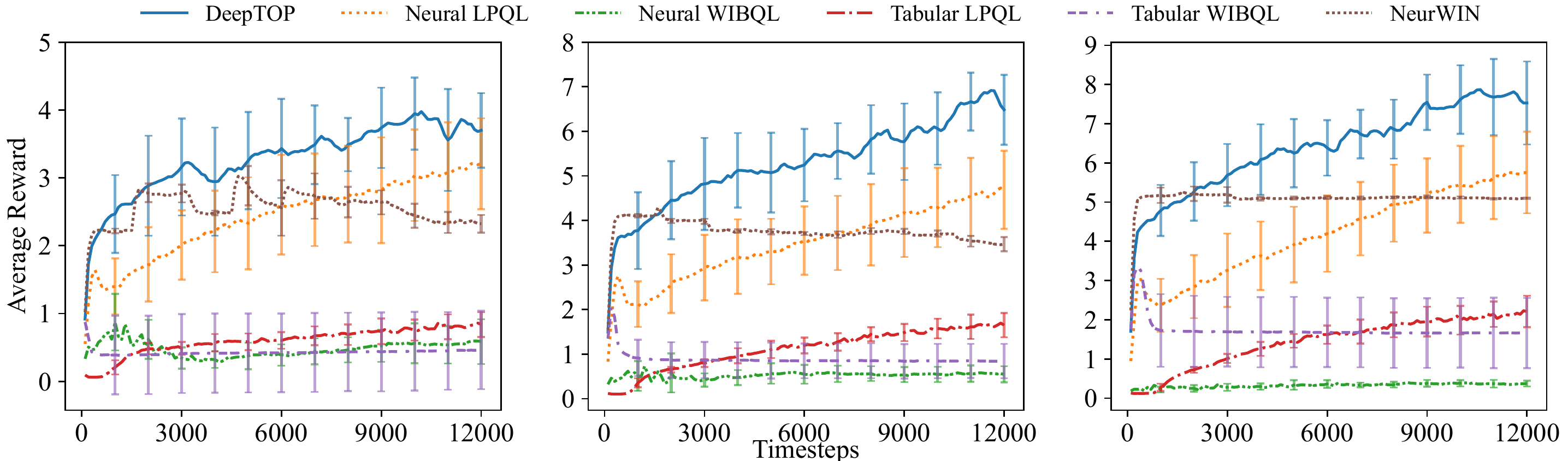}
\subfloat[$N=10.$ $V=3.$]{\hspace{0.35\linewidth}}
\subfloat[$N=20.$ $V=5.$]{\hspace{0.33\linewidth}}
\subfloat[$N=30.$ $V=6.$]{\hspace{0.33\linewidth}}
\caption{Average reward results for the one-dimensional bandits.}
\label{fig:rmab_prob}
\end{figure*}

\begin{figure*}[ht]
\vspace{-1em}
\centering
\includegraphics[width=\linewidth]{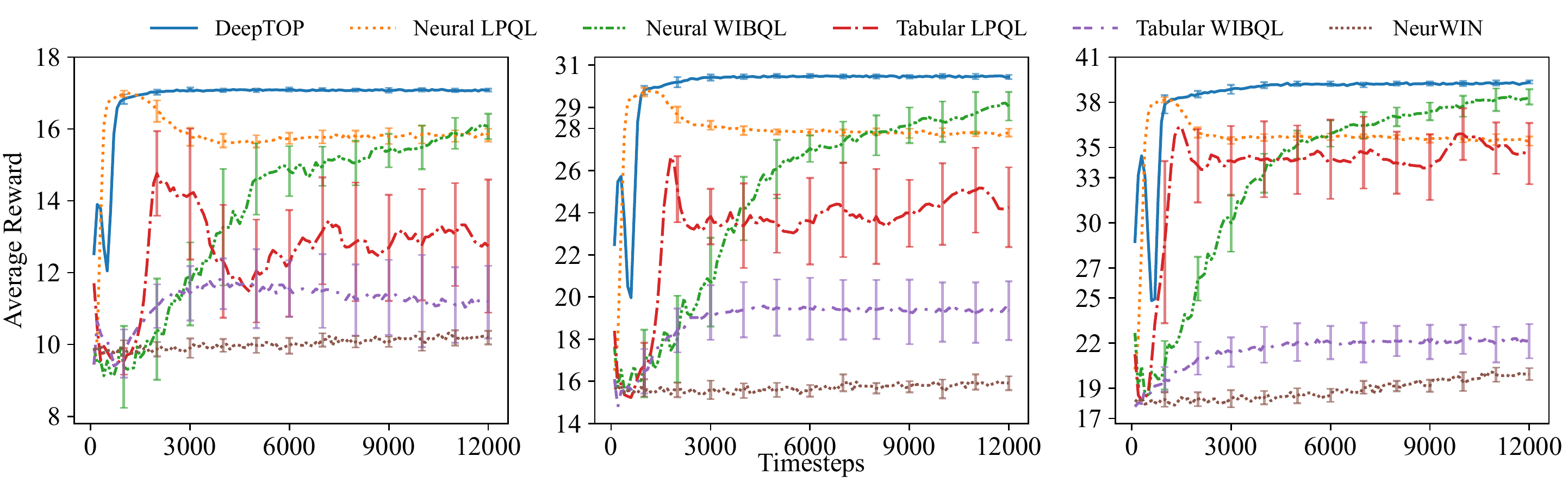}
\subfloat[$N=10.$ $V=3.$]{\hspace{0.35\linewidth}}
\subfloat[$N=20.$ $V=5.$]{\hspace{0.35\linewidth}}
\subfloat[$N=30.$ $V=6.$]{\hspace{0.3\linewidth}}
\caption{Average reward results for the recovering bandits.}
\label{fig:recovering_bandits_results}
\vspace{-1em}
\end{figure*}
\vspace{-1em}
\section{Related Work} \label{sec:related}

Threshold policies have been analysed for many decision-making problems formed as MDPs.
\cite{hegde2011} examined the residential energy storage under price fluctuations problem, and proved the existence of optimal threshold policies for minimizing the cost.
\cite{erseghe2013} proved that MDPs with a convex and piecewise linear cost functions admit an optimal threshold policy.
\cite{petrik2015} shows the existence of an optimal threshold policy for energy arbitrage given degrading battery capacity, with \cite{bertrand2019} using the REINFORCE algorithm \cite{williams1992} to learn a trading policy with price thresholds for intraday electricity markets. 
\cite{huang2016} considered mean field games in a multi-agent MDP setting, and characterized individual agent strategy with a threshold policy when the mean game admits a threshold policy.

More recently, \cite{weng2017} studies finding a job assigning threshold policy for data centers with heterogeneous servers and job classes, and gave conditions for the existence of optimal threshold policies.
\cite{zhou2019} proposed a distributed threshold-based control policy for graph traversal by assigning a state threshold that determines if the agent stays in or leaves a state.
For minimizing the age of information in energy-harvesting sensors, \cite{ceran2019} used the finite-difference policy gradient \cite{peters2006policy} to learn a possibly sub-optimal threshold policy in the average cost setting.
\cite{hosseinloo2020event} proposed an RL-based threshold policy for semi-MDPs in controlling micro-climate for buildings with simulations proving efficacy on a single-zone building.
\cite{shen2020} used the Deep Q-network RL algorithm for selecting alert thresholds in anti-fraud systems with simulations showing performance improvements over static threshold policies. 
\cite{salmut2021} described the SALMUT RL algorithm for exploiting the ordered multi-threshold structure of the optimal policy with SALMUT implementations in \cite{jitani2021} for computing node's overload protection.
In contrast to these works, DeepTOP-MDP is applicable to any MDP that admits threshold policies.

In learning the Whittle index policy for RMABs, \cite{fu2019} proposed a Q-learning heuristic called the Q Whittle Index Controller (QWIC) which may not find the Whittle indices even when the training converges. \cite{nakhleh2021} describes a Deep RL algorithm called NeurWIN for learning the Whittle index of a restless arm independently of other arms. However, NeurWIN requires a simulator to train the neural networks. Some recent studies, such as \cite{avrachenkov2020, biswas2021, killian2021}, proposed various online learning algorithms that can find Whittle index when the algorithms converge. These algorithms rely on some indirect property of the Whittle index which explains why they converge slower than DeepTOP.

\vspace{-1.5em}
\section{Conclusion and Future Work} \label{sec:conclusion}
\vspace{-1em}

In this paper, we presented DeepTOP: a Deep RL actor-critic algorithm that learns the optimal threshold function for MDPs that admit a threshold policy and for RMAB problems. 
We first developed the threshold policy gradient theorem, where we proved that a threshold function has a simple to compute gradient.
Based on the gradient expressions, we design the DeepTOP-MDP and DeepTOP-RMAB algorithm variants and compare them against state-of-the-art learning algorithms. 
In both the MDP and RMAB settings, experiment results showed that DeepTOP exceeds the performance of baselines in all considered problems.
A promising future direction is to extend DeepTOP to threshold policies with multiple actions.
For example, the Federal Reserve needs to decide not only whether to raise interest rate, but also the amount of rate hike.

\vspace{-1em}
\begin{ack}  \label{sec:ack}
\vspace{-0.5em}
This material is based upon work supported in part by NSF under Award Number ECCS-2127721, in part by the U.S. Army Research Laboratory and the U.S. Army Research Office under Grant Number W911NF-22-1-0151, and in part by Office of Naval Research under Contract N00014-21-1-2385.
Portions of this research were conducted with the advanced computing resources provided by Texas A\&M High Performance Research Computing.
\end{ack}

\bibliography{main}

\begin{thebibliography}{10}

\bibitem{avrachenkov2020}
Konstantin Avrachenkov and Vivek~S. Borkar.
\newblock Whittle index based q-learning for restless bandits with average
  reward.
\newblock {\em CoRR}, abs/2004.14427, 2020.

\bibitem{bertrand2019}
Gilles Bertrand and Anthony Papavasiliou.
\newblock Reinforcement-learning based threshold policies for continuous
  intraday electricity market trading.
\newblock In {\em 2019 IEEE Power Energy Society General Meeting (PESGM)},
  pages 1--5, 2019.

\bibitem{biswas2021}
Arpita Biswas, Gaurav Aggarwal, Pradeep Varakantham, and Milind Tambe.
\newblock Learn to {Intervene}: {An} {Adaptive} {Learning} {Policy} for
  {Restless} {Bandits} in {Application} to {Preventive} {Healthcare}.
\newblock In {\em Proceedings of the {Thirtieth} {International} {Joint}
  {Conference} on {Artificial} {Intelligence}}, pages 4039--4046, Montreal,
  Canada, August 2021. International Joint Conferences on Artificial
  Intelligence Organization.

\bibitem{ceran2019}
Elif~Tuğçe Ceran, Deniz Gündüz, and András György.
\newblock Reinforcement learning to minimize age of information with an energy
  harvesting sensor with harq and sensing cost.
\newblock In {\em IEEE INFOCOM 2019 - IEEE Conference on Computer
  Communications Workshops (INFOCOM WKSHPS)}, pages 656--661, 2019.

\bibitem{erseghe2013}
Tomaso Erseghe, Andrea Zanella, and Claudio~G Codemo.
\newblock Markov decision processes with threshold based piecewise linear
  optimal policies.
\newblock {\em IEEE Wireless Communications Letters}, 2(4):459--462, 2013.

\bibitem{fu2019}
Jing Fu, Yoni Nazarathy, Sarat Moka, and Peter~G. Taylor.
\newblock Towards q-learning the whittle index for restless bandits.
\newblock In {\em 2019 Australian New Zealand Control Conference (ANZCC)},
  pages 249--254, 2019.

\bibitem{fujimoto_td3}
Scott Fujimoto.
\newblock Sfujim/td3: Author's pytorch implementation of td3 for openai gym
  tasks.
\newblock \url{https://github.com/sfujim/TD3}.

\bibitem{fujimoto2018}
Scott Fujimoto, Herke Hoof, and David Meger.
\newblock Addressing function approximation error in actor-critic methods.
\newblock In {\em International Conference on Machine Learning}, pages
  1582--1591, 2018.

\bibitem{gittins2011}
John Gittins, Kevin Glazebrook, and Richard Weber.
\newblock {\em Multi-armed bandit allocation indices}.
\newblock John Wiley \& Sons, 2011.

\bibitem{goyal2003production}
SK~Goyal and Bibhas~Chandra Giri.
\newblock The production--inventory problem of a product with time varying
  demand, production and deterioration rates.
\newblock {\em European Journal of Operational Research}, 147(3):549--557,
  2003.

\bibitem{hegde2011}
Nidhi Hegde, Laurent Massouli{\'e}, Theodoros Salonidis, et~al.
\newblock Optimal control of residential energy storage under price
  fluctuations.
\newblock {\em Energy}, 2011.

\bibitem{horng_ddpg}
Guan Horng~Liu.
\newblock Ghliu/pytorch-ddpg: Implementation of the deep deterministic policy
  gradient (ddpg) using pytorch.
\newblock \url{https://github.com/ghliu/pytorch-ddpg}.

\bibitem{hosseinloo2020event}
Ashkan~Haji Hosseinloo and Munther~A Dahleh.
\newblock Event-triggered reinforcement learning; an application to buildings'
  micro-climate control.
\newblock In {\em AAAI Spring Symposium: MLPS}, 2020.

\bibitem{huang2016}
Minyi Huang and Yan Ma.
\newblock Mean field stochastic games: Monotone costs and threshold policies.
\newblock In {\em 2016 IEEE 55th Conference on Decision and Control (CDC)},
  pages 7105--7110, 2016.

\bibitem{jakvsivc2015optimal}
Marko Jak{\v{s}}i{\v{c}} and Jan~C Fransoo.
\newblock Optimal inventory management with supply backordering.
\newblock {\em International journal of production economics}, 159:254--264,
  2015.

\bibitem{jitani2021}
Anirudha Jitani, Aditya Mahajan, Zhongwen Zhu, Hatem Abou-Zeid, Emmanuel~Thepie
  Fapi, and Hakimeh Purmehdi.
\newblock Structure-aware reinforcement learning for node-overload protection
  in mobile edge computing.
\newblock In {\em ICC 2021 - IEEE International Conference on Communications},
  pages 1--6, 2021.

\bibitem{killian2021}
Jackson~A. Killian, Arpita Biswas, Sanket Shah, and Milind Tambe.
\newblock Q-learning lagrange policies for multi-action restless bandits.
\newblock In {\em Proceedings of the 27th ACM SIGKDD Conference on Knowledge
  Discovery \& Data Mining}, page 871–881, New York, NY, USA, 2021.
  Association for Computing Machinery.

\bibitem{kingma2015}
Diederick~P Kingma and Jimmy Ba.
\newblock Adam: A method for stochastic optimization.
\newblock In {\em International Conference on Learning Representations (ICLR)},
  2015.

\bibitem{lillicrap2015}
Timothy~P Lillicrap, Jonathan~J Hunt, Alexander Pritzel, Nicolas Heess, Tom
  Erez, Yuval Tassa, David Silver, and Daan Wierstra.
\newblock Continuous control with deep reinforcement learning.
\newblock {\em arXiv preprint arXiv:1509.02971}, 2015.

\bibitem{nakhleh2021}
Khaled Nakhleh, Santosh Ganji, Ping-Chun Hsieh, I-Hong Hou, and Srinivas
  Shakkottai.
\newblock Neur{WIN}: Neural whittle index network for restless bandits via deep
  {RL}.
\newblock In {\em Advances in Neural Information Processing Systems}, 2021.

\bibitem{papadimitriou1999}
Christos~H. Papadimitriou and John~N. Tsitsiklis.
\newblock The complexity of optimal queuing network control.
\newblock {\em Mathematics of Operations Research}, 24(2):293--305, 1999.

\bibitem{pytorch2019}
Adam Paszke, Sam Gross, Francisco Massa, Adam Lerer, James Bradbury, Gregory
  Chanan, Trevor Killeen, Zeming Lin, Natalia Gimelshein, Luca Antiga, Alban
  Desmaison, Andreas Kopf, Edward Yang, Zachary DeVito, Martin Raison, Alykhan
  Tejani, Sasank Chilamkurthy, Benoit Steiner, Lu~Fang, Junjie Bai, and Soumith
  Chintala.
\newblock Pytorch: An imperative style, high-performance deep learning library.
\newblock In {\em Advances in Neural Information Processing Systems 32}, pages
  8026--8037. Curran Associates, Inc., 2019.

\bibitem{peters2006policy}
Jan Peters and Stefan Schaal.
\newblock Policy gradient methods for robotics.
\newblock In {\em 2006 IEEE/RSJ International Conference on Intelligent Robots
  and Systems}, pages 2219--2225. IEEE, 2006.

\bibitem{petrik2015}
Marek Petrik and Xiaojian Wu.
\newblock Optimal threshold control for energy arbitrage with degradable
  battery storage.
\newblock In {\em UAI}, pages 692--701. Citeseer, 2015.

\bibitem{recovering_2019}
Ciara Pike-Burke and Steffen Grunewalder.
\newblock Recovering bandits.
\newblock In {\em Advances in Neural Information Processing Systems},
  volume~32. Curran Associates, Inc., 2019.

\bibitem{salmut2021}
Arghyadip Roy, Vivek~S. Borkar, Abhay Karandikar, and Prasanna Chaporkar.
\newblock Online reinforcement learning of optimal threshold policies for
  markov decision processes.
\newblock {\em IEEE Transactions on Automatic Control}, pages 1--1, 2021.

\bibitem{san2021profit}
Luis~A San-Jos{\'e}, Joaqu{\'\i}n Sicilia, Manuel Gonz{\'a}lez-de-la Rosa, and
  Jaime Febles-Acosta.
\newblock Profit maximization in an inventory system with time-varying demand,
  partial backordering and discrete inventory cycle.
\newblock {\em Annals of Operations Research}, pages 1--21, 2021.

\bibitem{schwarz2006queueing}
Maike Schwarz and Hans Daduna.
\newblock Queueing systems with inventory management with random lead times and
  with backordering.
\newblock {\em Mathematical Methods of Operations Research}, 64(3):383--414,
  2006.

\bibitem{shen2020}
Hongda Shen and Eren Kurshan.
\newblock Deep q-network-based adaptive alert threshold selection policy for
  payment fraud systems in retail banking.
\newblock In {\em Proceedings of the First ACM International Conference on AI
  in Finance}, ICAIF '20, New York, NY, USA, 2020. Association for Computing
  Machinery.

\bibitem{uhlenbeck1930}
G.~E. Uhlenbeck and L.~S. Ornstein.
\newblock On the theory of the brownian motion.
\newblock {\em Phys. Rev.}, 36:823--841, Sep 1930.

\bibitem{weng2017}
Paul Weng, Zeqi Qiu, John Costanzo, Xiaoqi Yin, and Bruno Sinopoli.
\newblock Optimal threshold policies for robust data center control.
\newblock In {\em International Conference on Advanced Engineering Theory and
  Applications}, pages 104--114. Springer, 2017.

\bibitem{whittle1988}
Peter Whittle.
\newblock Restless bandits: Activity allocation in a changing world.
\newblock {\em Journal of applied probability}, pages 287--298, 1988.

\bibitem{williams1992}
Ronald~J. Williams.
\newblock Simple statistical gradient-following algorithms for connectionist
  reinforcement learning.
\newblock {\em Mach. Learn.}, 8(3–4):229–256, May 1992.

\bibitem{yu2018}
Zhe Yu, Yunjian Xu, and Lang Tong.
\newblock Deadline scheduling as restless bandits.
\newblock {\em IEEE Transactions on Automatic Control}, 63(8):2343--2358, 2018.

\bibitem{zhou2019}
Nan Zhou, Christos~G Cassandras, Xi~Yu, and Sean~B Andersson.
\newblock Optimal threshold-based distributed control policies for persistent
  monitoring on graphs.
\newblock In {\em 2019 American Control Conference (ACC)}, pages 2030--2035.
  IEEE, 2019.

\end{thebibliography}
\bibliographystyle{plain}

\newpage{} 
\appendix

\section*{\centering Appendices For DeepTOP: Deep Threshold-Optimal Policy for MDPs and RMABs} \label{sec:appendices}

\section{Threshold Optimal Policy Gradient Theorem Proof for RMABs} \label{appendix:rmab_theorem_proof}

\begin{proof}
Let $\bar{\rho}_{
\lambda_t}(s_i)$ be the distribution that the state at time $t$ is $s_i$ when the initial state is chosen uniformly at random. 
We have $\bar{\rho}_{
\lambda_t}(s_i) = \sum_{t=1}^\infty \gamma^{t-1}\bar{\rho}_{t,
\lambda}(s_i)$.
Given $\phi_i$, we number all states in $\mathcal{S}_i$ such that $\mu_i^{\phi_i}(s_i^1)>\mu_i^{\phi_i}(s_i^2)>\dots$. Let $\mathbb{M}^0=+M$, $\mathbb{M}^n=\mu_i^{\phi_i}(s_i^n)$, for all $1\leq n\leq |\mathcal{S}_i|$, and $\mathbb{M}^{|\mathcal{S}_i|+1}=-M$. Also, let $\mathbb{S}_i^n$ be the subset of states $\{s_i|\mu_i^{\phi_i}(s_i)>\mathbb{M}^n\}=\{s_i^1, s_i^2,\dots, s_i^{n-1}\}$. Now, consider the interval $(\mathbb{M}^{n+1}, \mathbb{M}^n)$ for some $n$. For all $\lambda\in (\mathbb{M}^{n+1}, \mathbb{M}^n)$, $\mathds{1}(\mu_i^{\phi_i}(s_i) > \lambda) = 1$ if and only if $s_i \in \mathbb{S}_i^{n+1}$. In other words, the threshold policy takes the same action under all $\lambda\in (\mathbb{M}^{n+1}, \mathbb{M}^n)$, and we use $\pi^{n+1}(s_i)$ to denote this policy. We then have
\begin{align}
    &\nabla_{\phi_i} K_i(\mu_i^{\phi_i}) = \nabla_{\phi_i}\int_{\lambda=-M}^{\lambda=+M} \sum_{s_{i}\in \mathcal{S}_i} Q_{i,\lambda}(s_{i},\mathds{1}(\mu_i^{\phi_i}(s_i) > \lambda))d\lambda = \sum_{s_{i}\in \mathcal{S}_i}\nabla_{\phi_i} \int_{\lambda=-M}^{\lambda=+M}Q_{i,\lambda}(s_{i},\mathds{1}(\mu_i^{\phi_i}(s_i) > \lambda))d\lambda \nonumber \\
    &= \sum_{s_{i}\in \mathcal{S}_i}\sum_{n=0}^{|\mathcal{S}_i|}\nabla_{\phi_i}\int_{\lambda=\mathbb{M}^{n+1}}^{\lambda=\mathbb{M}^n}Q_{i,\lambda}(s_{i},\pi^{n+1}(s_i))d\lambda \nonumber \\
    &=\sum_{s_{i}\in \mathcal{S}_i}\sum_{n=0}^{|\mathcal{S}_i|}
    \Bigg(Q_{i,\mathbb{M}^n}(s_i,\pi^{n+1}(s_i))\nabla_{\phi_i} \mathbb{M}^{n} -Q_{i,\mathbb{M}^{n+1}}(s_{i},\pi^{n+1}(s_i))\nabla_{\phi_i} \mathbb{M}^{n+1} + \int_{\lambda=\mathbb{M}^{n+1}}^{\lambda=\mathbb{M}^n} \nabla_{\phi_i} Q_{i,\lambda}(s_{i}, \pi^{n+1}(s_i)) d\lambda  \Bigg), \nonumber \label{eq:policy_gradient_rmab:gradientK} \\
\end{align}

where the summation-integration swap in the first equation follows the Fubini-Tonelli theorem and the last step follows the Leibniz integral rule. We simplify the first two terms in the last step by 
\begin{align}
    &\sum_{s_{i}\in \mathcal{S}_i}\sum_{n=0}^{|\mathcal{S}_i|} \Big(Q_{i,\mathbb{M}^n}(s_i,\pi^{n+1}(s_i))\nabla_{\phi_i} \mathbb{M}^{n} -Q_{i,\mathbb{M}^{n+1}}(s_{i},\pi^{n+1}(s_i))\nabla_{\phi_i} \mathbb{M}^{n+1} \Big) \nonumber \\
    &=  \sum_{s_{i}\in \mathcal{S}_i}\sum_{n=1}^{|\mathcal{S}_i|} \Big( Q_{i,\mu_i^{\phi_i}(s_i)}(s_i,\mathds{1}(s_i \in \mathbb{S}_i^{n+1})) -Q_{i,\mu_i^{\phi_i}(s_i)}(s_{i},s_i \in \mathbb{S}_i^{n}) \Big) \nabla_{\phi_i} \mu_i^{\phi_i}(s_i) \nonumber \\
    &=|\mathcal{S}_i| \sum_{s_i\in\mathcal{S}_i} \bar{\rho}_{1,\mu_i^{\phi_i}(s_i)}(s_i) \Big( Q_{i,\mu_i^{\phi_i}(s_i)}\big(s_i,1\big)-Q_{i,\mu_i^{\phi_i}(s_i)}\big(s_i,0)\Big)\nabla_{\phi_i}\mu_i^{\phi_i}(s_i) \label{eq:rmab-first-two-terms}.
\end{align}
Next, we expand the last term in \eqref{eq:policy_gradient_rmab:gradientK}. Note that $Q_{i, \lambda}(s_i, a_i) = \bar{r}(s_i ,a_i) + \gamma \int_{\lambda' = -M}^{\lambda' = + M} \sum_{s'_i} p(s'_i | s_i, a_i) Q_{i, \lambda'}(s'_i, \mathds{1}(\mu_i^{\phi_i}(s'_i) > \lambda'))d\lambda'$, where $p(\cdot | \cdot)$ is the transition probability. Hence, $\nabla_{\phi_i} Q_{i, \lambda}(s_i, \mathds{1}(\mu_i^{\phi_i}(s_i) > \lambda)) = \nabla_{\phi_i} \gamma \int_{\lambda' = -M}^{\lambda' = + M} \sum_{s'_i} p(s'_i | s_i, a_i) Q_{i, \lambda'}(s'_i, \mathds{1}(\mu_i^{\phi_i}(s'_i) > \lambda'))d\lambda'$. 
Using the same techniques in \eqref{eq:policy_gradient_rmab:gradientK} and \eqref{eq:rmab-first-two-terms}, we have

\begin{align}
&\sum_{s_{i}\in \mathcal{S}_i}\sum_{n=0}^{|\mathcal{S}_i|} \int_{\lambda=\mathbb{M}^{n+1}}^{\lambda=\mathbb{M}^n} \nabla_{\phi_i} Q_{i,\lambda}(s_{i}, \pi^{n+1}(s_i)) d\lambda = \sum_{s_{i}\in \mathcal{S}_i} \int_{\lambda=-M}^{\lambda=+M}  \nabla_{\phi_i}  Q_{i,\lambda}(s_{i},\mathds{1}(\mu_i^{\phi_i}(s_i) > \lambda))d\lambda   \nonumber \\
&= \gamma  \sum_{s_{i}\in \mathcal{S}_i} \int_{\lambda=-M}^{\lambda=+M}  \Big( \nabla_{\phi_i}  \int_{\lambda'=-M}^{\lambda'=+M}  \sum\limits_{s'_i \in \mathcal{S}_i}  p(s'_i | s_i, \mathds{1}(\mu_i^{\phi_i}(s_i) > \lambda)) Q_{i, \lambda'}(s'_i, \mathds{1}(\mu_i^{\phi_i}(s'_i) > \lambda'))d\lambda' \Big) d\lambda \nonumber \\
&= |\mathcal{S}_i| \sum\limits_{s_i \in \mathcal{S}_i} \gamma\bar{\rho}_{2,\mu_i^{\phi_i}(s_i)}(s_i) \Big( Q_{i,\mu_i^{\phi_i}(s_i)}\big(s_i,1\big)-Q_{i,\mu_i^{\phi_i}(s_i)}\big(s_i,0)\Big)\nabla_{\phi_i}\mu_i^{\phi_i}(s_i) \nonumber \\
&+\gamma \sum\limits_{s_i \in \mathcal{S}_i}    \int_{\lambda=-M}^{\lambda=+M}  \Big( \sum\limits_{s'_i \in \mathcal{S}_i}    \int_{\lambda'=-M}^{\lambda'=+M}  \nabla_{\phi_i} \big(p(s'_i | s_i,\mathds{1}(\mu_i^{\phi_i}(s'_i) > \lambda')) Q_{i, \lambda'}(s'_i, \mathds{1}(\mu_i^{\phi_i}(s'_i) > \lambda'))\big)d\lambda' \Big) d\lambda. \nonumber 
\end{align}

In the above equation, expanding the last term in time establishes~\eqref{equation:deterministic_policy}.
\end{proof}

\clearpage
\section{DeepTOP-RMAB Algorithm Pseudocode} \label{app:rmab_algo}

\renewcommand{\algorithmiccomment}[1]{#1}
\begin{algorithm}[th]
   \caption{Deep Threshold Optimal Policy Training for RMABs (DeepTOP-RMAB)}
   \label{alg:threshold_rmab}
\begin{algorithmic}
\FOR{arm $i = 1,2,\hdots,N$}
\STATE Randomly select initial parameters for the actor network $\phi_i$ and critic network $\theta_i$.
\STATE Set target critic network $\theta_i' \leftarrow \theta_i$, and initialize replay memory $\mathcal{M}_i$.
\ENDFOR
   \FOR{timestep $t = 1, 2, 3, \hdots$}
   \FOR{arm $i= 1,2,\hdots, N$}
   \STATE Receive state $s_{i,t}$ from arm environment $\mathcal{E}_i$, and calculate the state value $\mu_i^{\phi_i}(s_{i,t})$.
   \ENDFOR
   \STATE With probability $1-\epsilon_t$, activate the $V$ largest-valued arms and keep the remaining arms passive. Otherwise, randomly activate $V$ arms with the remaining arms left passive.
   \FOR{arm $i=1,2,\hdots,N$}
   \STATE Observe reward $r_{i,t}$ and next state $s_{i,t+1}$.
   \STATE Store transition $\{s_{i,t},a_{i,t},r_{i,t},s_{i,t+1}\}$ in memory $\mathcal{M}_i$.
   \STATE Sample a minibatch of $B$ transitions $\{s_{i,{t_k}},a_{i,{t_k}},r_{i,{t_k}},s_{i,{t_k}+1}\}$, for $1 \leq k \leq B$ from memory $\mathcal{M}_i$.
   \STATE Randomly select $B$ values $[\lambda_{i,1}, \lambda_{i,2},\hdots,\lambda_{i,B}]$, for $1 \leq k \leq B$ from the range $[-M,+M]$.
   \STATE Update arm $i$'s critic network using the estimated gradient in Equation~\eqref{equation:rmab_critic_loss_gradient}.
   \STATE Update arm $i$'s actor network using the estimated gradient in Equation~\eqref{equation:rmab_actor_loss_gradient}.

   \STATE Soft update target critic $\theta_i'$ network parameters: $\theta_i' \leftarrow  \tau \theta_{i} + (1 - \tau) \theta_i'.$
   \ENDFOR
   \ENDFOR 
\end{algorithmic}
\end{algorithm}

\section{MDP Problems' Description} \label{app:mdp_prob_desc}

\paragraph{EV charging.}
The vector state $v_t = (C_t, D_t)$ consists of the charging requirement $C_t$, and the time remaining until the vehicle departs the station $D_t$ at time $t$.
In the simulations, we upper-bound the state elements with $C \leq 8$ and $D \leq 12$.
The scalar state $\lambda_t$ is sampled from an Ornstein-Uhlenbeck process with noise parameter $0.15$, noise mean $0.0$, and noise standard deviation $0.2$.

If the agent chooses to charge the vehicle by selecting action $a_t = 1$, the agent then obtains a reward of $1 - \lambda$, and the MDP transitions to the next state $v_{t+1} = (C_t - 1, D_t - 1)$.
Otherwise for $a_t = 0$, the reward is zero and the MDP transitions to next state $v_{t+1} = (C_t, D_t - 1)$.

If the charging spot is empty at the next timestep (i.e. $v_{t+1} = (0,0)$), the environment randomly picks the charge requirement $C_{t+1}$ and time until deadline $D_{t+1}$ of the next EV vehicle.
For the vehicle occupying the charging station, if it's charge requirement is not met by the deadline, the agent incurs a penalty of $F(C_t) = 0.2 (C_t)^2$ that is subtracted from reward $r_t$.
The net reward is then $r_t - F(C_t)$.

\paragraph{Inventory management.} 
The warehouse can store a maximum of $1000$ items, and is able to purchase new stock in bulks of $500$ items.
The selling price of a single item is set to $20$.
The vector state $v_t$ is the current market shopping season at time $t$.
The scalar state $\lambda_t$ is the current warehouse inventory count at time $t$.
We set $10$ different shopping seasons indexed by $b$ that model the customers' current demand rate. 
The corresponding demand rates for the seasons are $10$ different Poisson distributions with parameters $\sin(b\pi/10)\cdot 300$ for $0 \leq b \leq 9$.

If the agent orders items (i.e. $a_t = 1$), it receives a reward equal to the total items' selling price minus the minimum of remaining inventory count and current demand rate.
The next state $v_{t+1}$ is then the next market season index $v_{t+1} = b+1 \mod 10$.
Otherwise for $a_t = 0$, the agent holds off on buying new items, and incurs a holding cost from the remaining unsold items.
The next state is $v_{t+1} = b + 1 \mod 10$.

\paragraph{Make-to-stock production.}
The environment models a queueing system with $m$ servers serving at rate $1/\mu$ and a finite buffer with size $S$. 
There are $W$ customer classes each with Poisson mean arrival rate $\beta$.
The state at timestep $t$ is $(\lambda_t, v_t)$, with the scalar state $\lambda_t \in \{0,1,\hdots, m+S\}$ and the vector state $v_t \in \{1,2,\hdots,W\}$.
If the agent picks the passive action $a_t = 0$, then the reward is equal to the holding cost $h(\lambda_t) = -0.1(\lambda_t)^2$. 
For action $a_t = 1$, the agent receives total net reward of $R_v - h(\lambda_t)$ if the scalar state $\lambda_t = m + S$. Otherwise, the reward is the holding cost $-h(\lambda_t)$.

In the simulations, we set the number of servers $m=50$, buffer size $S=50$, number of customer classes $W=50$, $\mu = 4$, and arrival rate $\beta = 1$ for all customer classes.
The reward $R_v$ is chosen to be evenly spaced between $200$ and $10$ depending on the number of customer classes $W$.

\section{Experiments' Details} \label{app:experiment_details}

For all Deep RL algorithms, we used PyTorch \cite{pytorch2019} to implement them, with Adam \cite{kingma2015} as the optimizer. 
We used $10^{-4}$ as the learning rate for the actor networks, and a learning rate of $10^{-3}$ for the critic networks. 
We also set the initial learning rate of action-value function to $0.1$ for tabular LPQL and tabular WIBQL.
Tabular WIBQL initial learning rate for updating indices is $0.2$.
The warmup period has $1000$ timesteps used for filling the memory $\mathcal{M}$ with transitions from random actions.
We use a constant $\epsilon_t = 0.05$ through all timesteps.
A discount factor of $\gamma = 0.99$ was selected.
All neural network layers were initialized using PyTorch's default method. 
We update parameters using a minibatch size of $64$ transitions, with policy updates made at every timestep after the warmup period ends. 
The neural networks used for results in Section~\ref{sec:experiments} have two hidden layers with sizes $[128, 128]$, with the input layer dimension depending on the state size. Output layer has a dimension of one. 

The used code for TD3, tabular LPQL, and tabular WIBQL are licensed under the MIT license.
The DDPG code we used is licensed under the Apache license 2.0.
All Deep RL algorithms were trained using a computing cluster with $9632$ computing cores distributed over $320$ nodes. All algorithms were trained using CPU cores.

\section{Additional MDP Simulation Results Using Different Neural Network Architectures} \label{app:add_mdp_results}

We provide results here for the considered MDP problems for different neural network architectures.
All other hyperparameters were kept the same as described in Appendix~\ref{app:experiment_details}.

\begin{figure*}[ht] 
\centering
\captionsetup[subfloat]{captionskip=-10pt}
\includegraphics[width=\linewidth]{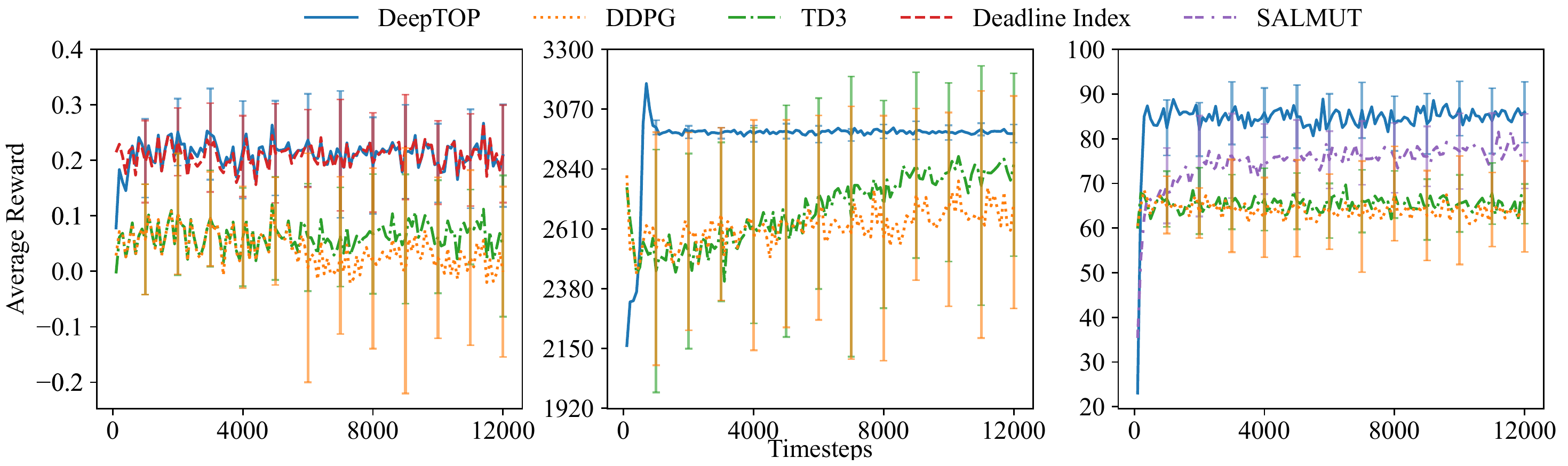}
\subfloat[EV charging.]{\hspace{0.38\linewidth}}
\subfloat[Inventory management.]{\hspace{0.27\linewidth}}
\subfloat[Make-to-stock production.]{\hspace{0.39\linewidth}}
\caption{Hidden layers' size: $[64, 128, 64]$. Average reward results for the MDP problems.}
\end{figure*}

\begin{figure*}[ht] 
\centering
\captionsetup[subfloat]{captionskip=-10pt}
\includegraphics[width=\linewidth]{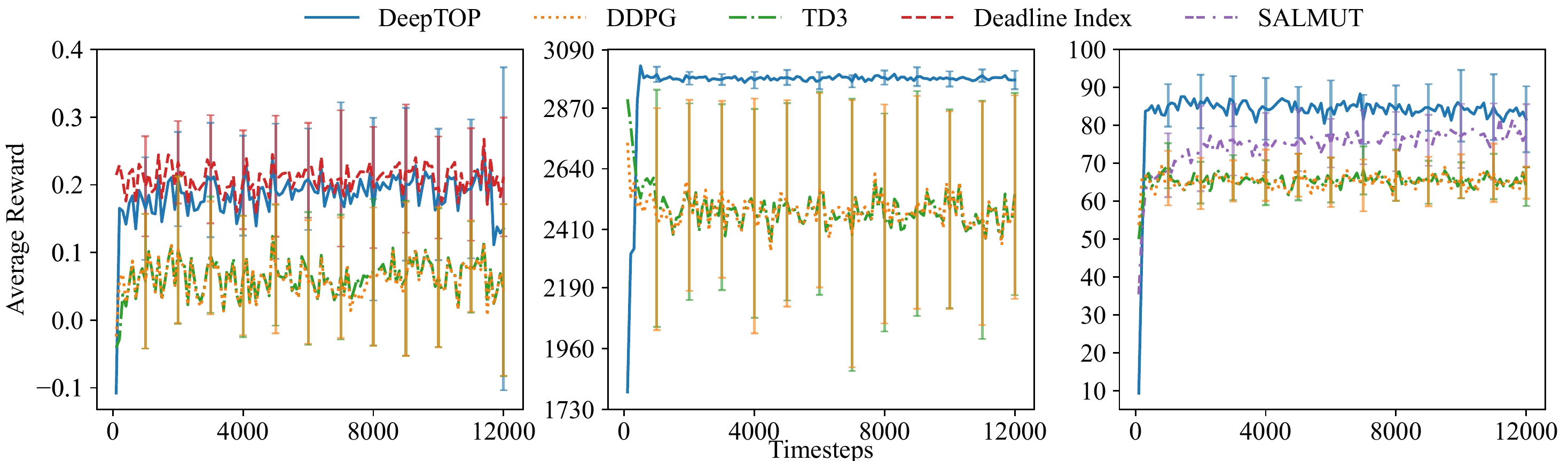}
\subfloat[EV charging.]{\hspace{0.38\linewidth}}
\subfloat[Inventory management.]{\hspace{0.27\linewidth}}
\subfloat[Make-to-stock production.]{\hspace{0.39\linewidth}}
\caption{Hidden layers' size: $[32, 64, 64, 64, 64, 32]$. Average reward results for the MDP problems.}
\end{figure*}

\begin{figure*}[ht] 
\centering
\captionsetup[subfloat]{captionskip=-10pt}
\includegraphics[width=\linewidth]{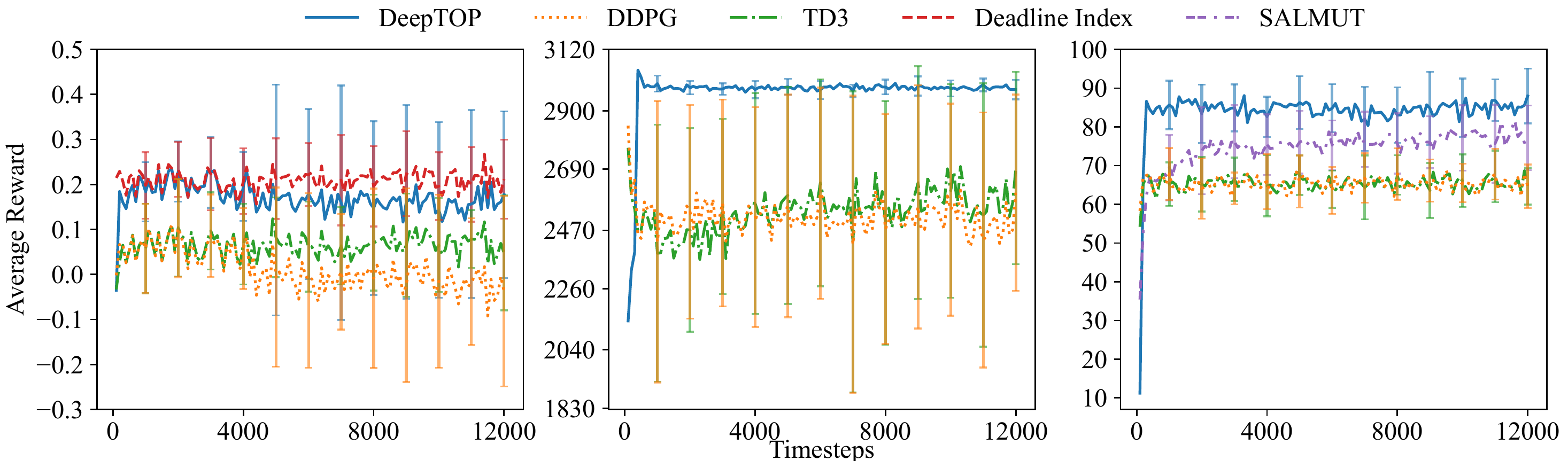}
\subfloat[EV charging.]{\hspace{0.38\linewidth}}
\subfloat[Inventory management.]{\hspace{0.27\linewidth}}
\subfloat[Make-to-stock production.]{\hspace{0.39\linewidth}} 
\caption{Hidden layers' size: $[64, 64, 64, 64, 64]$. Average reward results for the MDP problems.}
\end{figure*}

\section{Recovering Bandits' Description} \label{app:recovering_details}

\begin{table}[t]
\caption{$\Theta$ values for the recovering bandits' case.}
\label{table:theta_recovering}
\vskip 0.15in
\begin{center}
\begin{small}
\begin{sc}
\begin{tabular}{lcr}
\toprule
Class&$\theta_0$ Value&$\theta_1$ Value \\
\midrule
		A   &  10  & 0.2  \\ 
		B   & 8.5  & 0.4  \\ 
		C   &  7 & 0.6    \\
		D   &  5.5 & 0.8  \\ 
\bottomrule
\end{tabular}
\end{sc}
\end{small}
\end{center}
\vskip -0.1in
\end{table}

The state $s_{i,t}$ is the waiting time since the arm $i$ was last activated, with the maximum waiting time $z_{max}$ set to 100. If the agent chooses to activate the arm, the arm's state is reset to $1$.
The arm's reward is provided by the recovering function $f(s_{i,t})$, where if the arm is activated, the reward is the function value at $s_{i,t}$. Otherwise a reward of zero is given if the arm is left passive. The recovering reward function is generated from 
\begin{align}
        f(s_{i,t}) = \theta_0(1 - e^{-\theta_1\cdot s_{i,t}}).
\end{align}
We use the same $\Theta = [\theta_0, \theta_1]$ hyperparameters for setting the arms' reward classes as used in \cite{nakhleh2021} and provide them in Table~\ref{table:theta_recovering}.

\section{Additional One-Dimensional Bandits' Simulation Results Using Different Neural Network Architectures} \label{app:add_rmab_results}

We provide results here for the considered RMAB problem for different neural network architectures.
All other hyperparameters were kept the same as described in Appendix~\ref{app:experiment_details}.

\begin{figure*}[ht] 
\centering
\includegraphics[width=\linewidth]{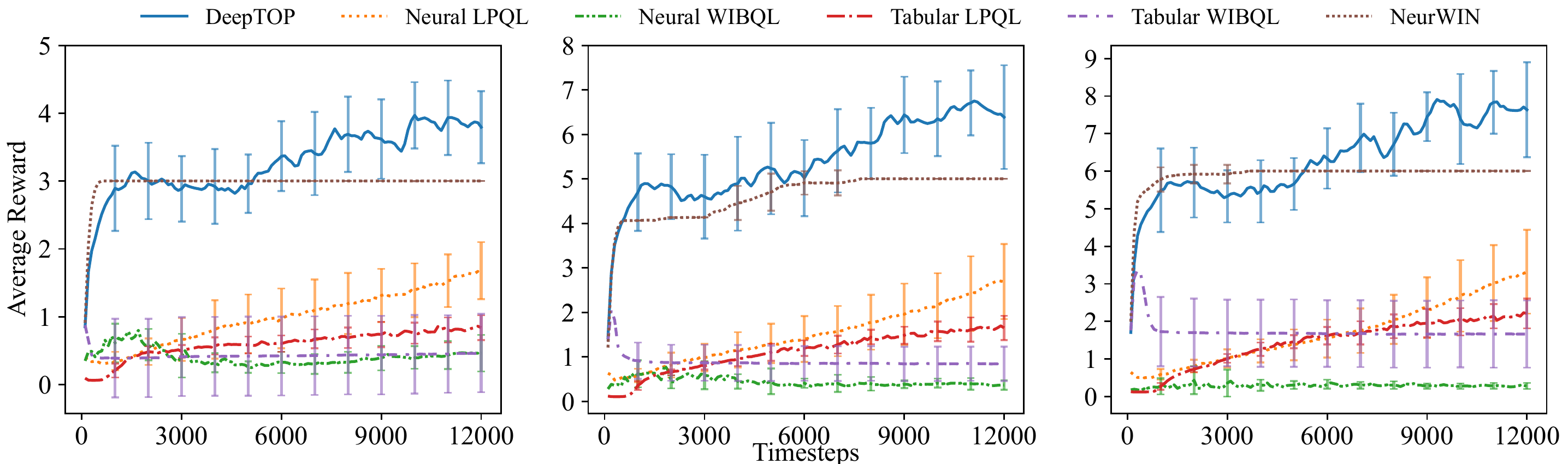}
\subfloat[$N=10.$ $V=3.$]{\hspace{0.35\linewidth}}
\subfloat[$N=20.$ $V=5.$]{\hspace{0.35\linewidth}}
\subfloat[$N=30.$ $V=6.$]{\hspace{0.32\linewidth}}
\caption{Hidden layers' size per arm: $[64, 128, 64]$. Average reward results for the one-dimensional bandits.}
\end{figure*}

\begin{figure*}[ht] 
\centering
\includegraphics[width=\linewidth]{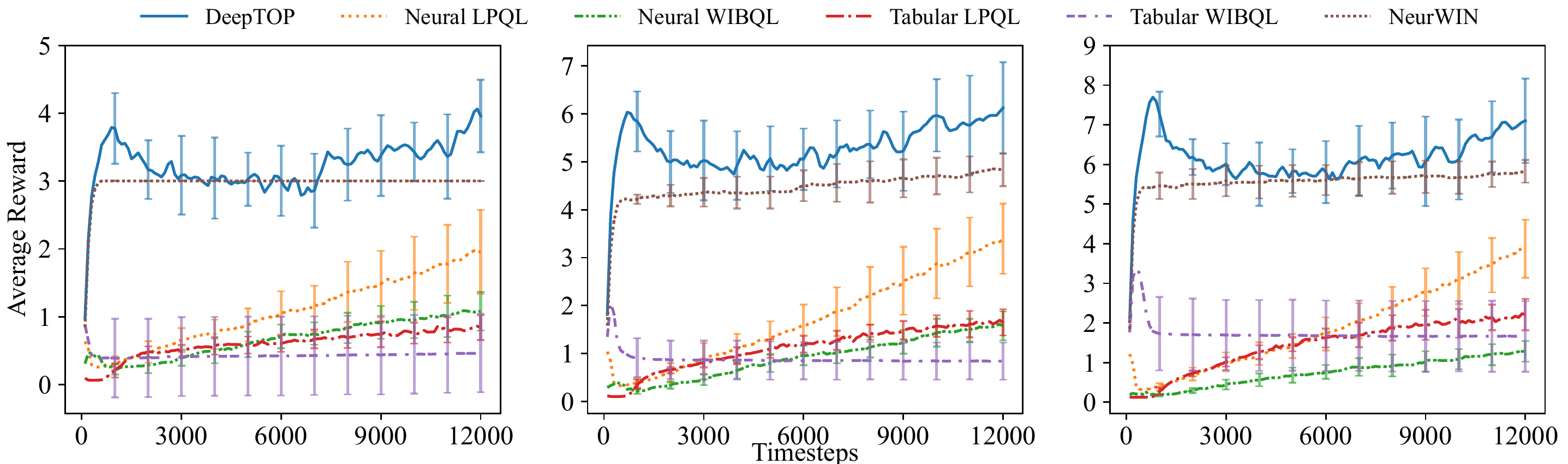}
\subfloat[$N=10.$ $V=3.$]{\hspace{0.35\linewidth}}
\subfloat[$N=20.$ $V=5.$]{\hspace{0.35\linewidth}}
\subfloat[$N=30.$ $V=6.$]{\hspace{0.32\linewidth}}
\caption{Hidden layers' size per arm: $[32, 64, 64, 64, 64, 32]$. Average reward results for the one-dimensional bandits.}
\end{figure*}

\begin{figure*}[ht] 
\centering
\includegraphics[width=\linewidth]{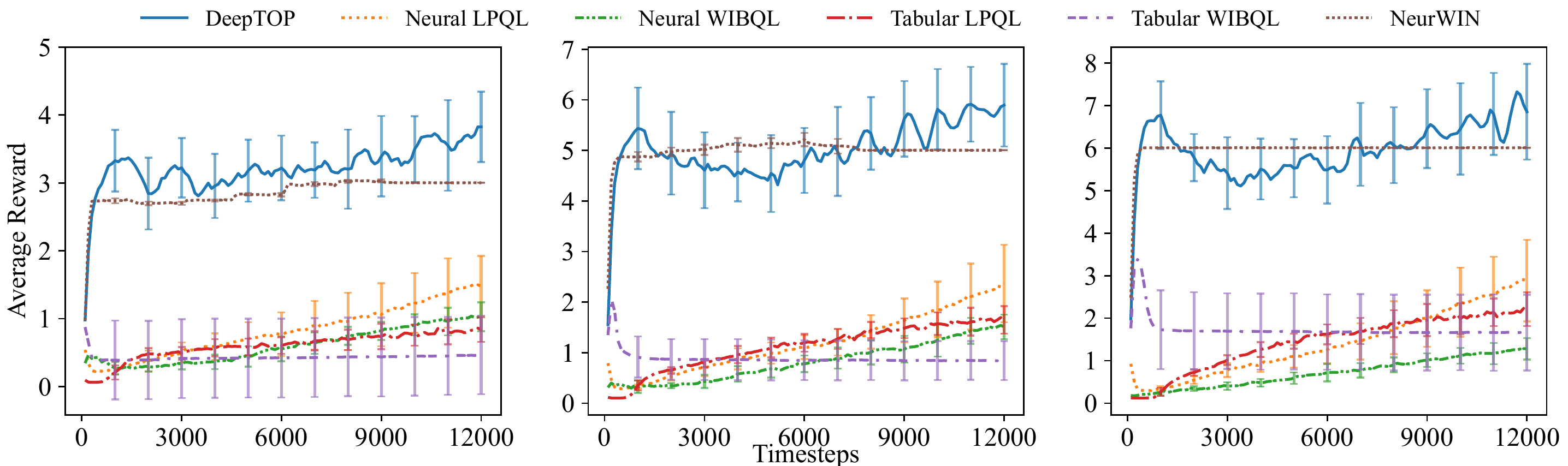}
\subfloat[$N=10.$ $V=3.$]{\hspace{0.35\linewidth}}
\subfloat[$N=20.$ $V=5.$]{\hspace{0.35\linewidth}}
\subfloat[$N=30.$ $V=6.$]{\hspace{0.32\linewidth}}
\caption{Hidden layers' size per arm: $[64, 64, 64, 64, 64]$. Average reward results for the one-dimensional bandits.}
\end{figure*}

\clearpage

\section{Additional Recovering Bandits' Simulation Results Using Different Neural Network Architectures} \label{app:add_recovering_results}

\begin{figure*}[ht]
\centering
\includegraphics[width=\linewidth]{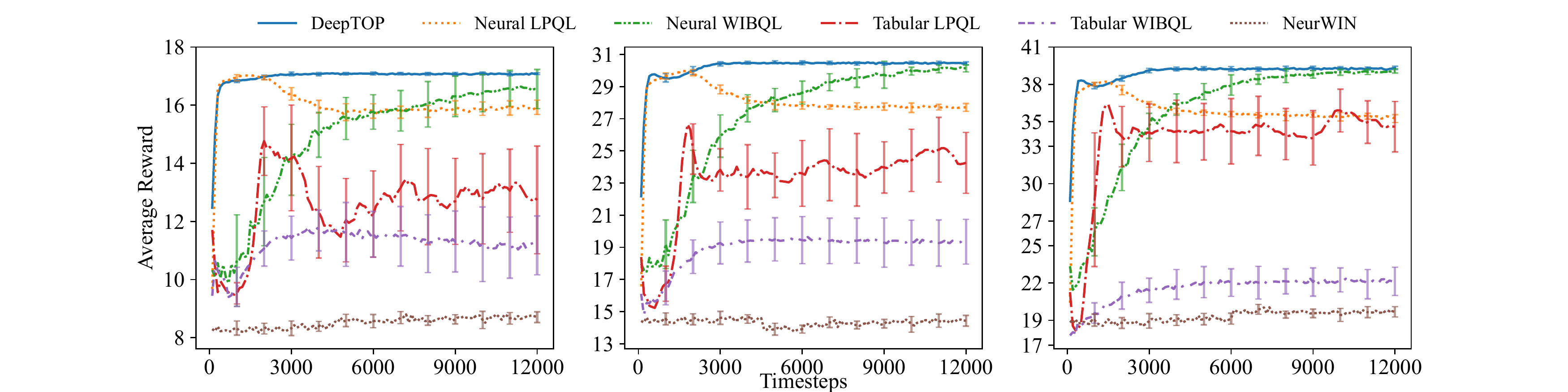}
\subfloat[$N=10.$ $V=3.$]{\hspace{0.35\linewidth}}
\subfloat[$N=20.$ $V=5.$]{\hspace{0.35\linewidth}}
\subfloat[$N=30.$ $V=6.$]{\hspace{0.3\linewidth}}
\caption{Hidden layers' size per arm: $[64, 128, 64]$. Average reward results for the recovering bandits.}
\end{figure*}

\begin{figure*}[ht]
\centering
\includegraphics[width=\linewidth]{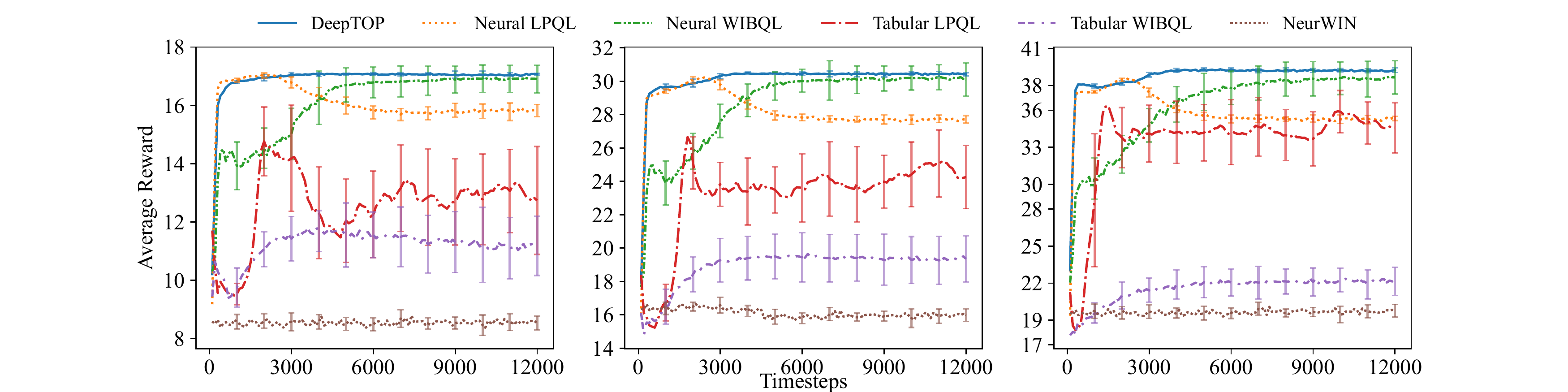}
\subfloat[$N=10.$ $V=3.$]{\hspace{0.35\linewidth}}
\subfloat[$N=20.$ $V=5.$]{\hspace{0.35\linewidth}}
\subfloat[$N=30.$ $V=6.$]{\hspace{0.3\linewidth}}
\caption{Hidden layers' size per arm: $[32, 64, 64, 64, 64, 32]$. Average reward results for the recovering bandits.}
\end{figure*}

\begin{figure*}[ht]
\centering
\includegraphics[width=\linewidth]{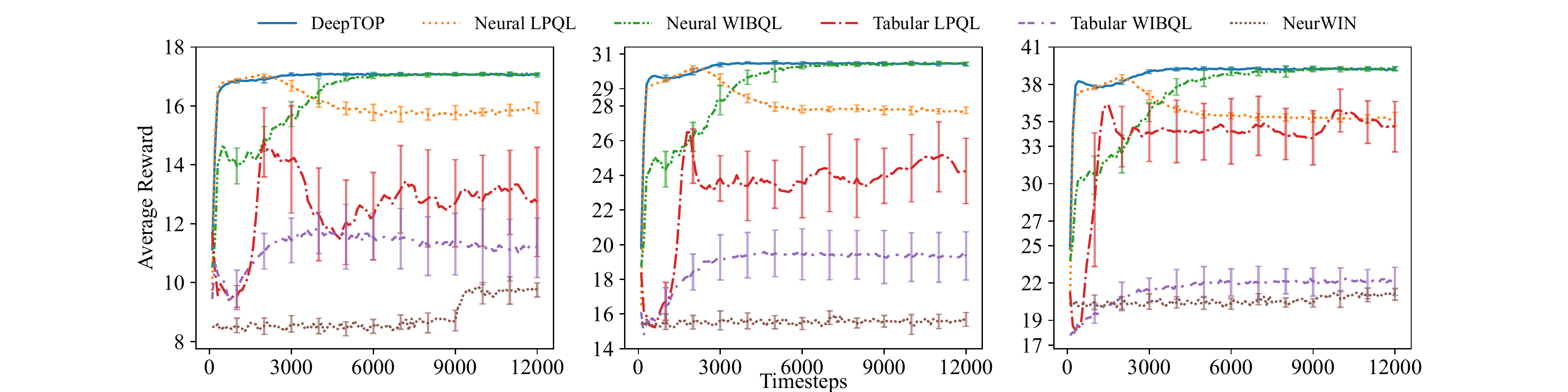}
\subfloat[$N=10.$ $V=3.$]{\hspace{0.35\linewidth}}
\subfloat[$N=20.$ $V=5.$]{\hspace{0.35\linewidth}}
\subfloat[$N=30.$ $V=6.$]{\hspace{0.3\linewidth}}
\caption{Hidden layers' size per arm: $[64, 64, 64, 64, 64]$. Average reward results for the recovering bandits.}
\end{figure*}

\end{document}